\newcommand{\argmin}{\mathop{\mathrm{argmin}}}
\newcommand{\Expect}{\mathbb{E}}
\newcommand{\Prob}{\mathbb{P}}
\DeclareSymbolFontAlphabet{\mathbb}{AMSb}
\DeclareSymbolFontAlphabet{\mathbbl}{bbold}
\newcommand{\reals}{\mathbb{R}}
\newcommand{\regret}{\bar R_K}
\newcommand{\rlalg}{\text{Alg}_\pi}
\newcommand{\costalg}{\text{Alg}_\lambda}
\newcommand{\piopt}{\tilde \pi _k}
\newcommand{\popt}{\tilde P _k}
\newcommand{\ropt}{\tilde r _k}
\newcommand{\ie}{{\it i.e.}}
\newcommand{\optset}{\mathcal{P}_k}
\newcommand{\dpiz}{\ensuremath{d^z_{\pi}}}
\newcommand{\dpik}{\ensuremath{d^k_{\pi}}}
\newcommand{\dpij}{\ensuremath{d^j_{\pi}}}
\newcommand{\dpikm}{\ensuremath{d^{k-1}_{\pi}}}
\newcommand{\dpione}{\ensuremath{d^1_{\pi}}}
\newcommand{\dpikp}{\ensuremath{d^{k+1}_{\pi}}}
\newcommand{\bdpik}{\ensuremath{\bar{d}^k_{\pi}}}
\newcommand{\bdpikp}{\ensuremath{\bar{d}^{k+1}_{\pi}}}
\newcommand{\bdpikm}{\ensuremath{\bar{d}^{k-1}_{\pi}}}
\newcommand{\bdpi}{\ensuremath{\bar{d}_{\pi}}}
\newcommand{\bc}{\ensuremath{\bar{\lambda}}}
\newcommand{\cost}{\ensuremath{\lambda}}
\newcommand{\costset}{\ensuremath{\Lambda}}
\newcommand{\dpi}{\ensuremath{d_{\pi}}}
\newcommand{\g}{\mathcal{L}}
\newcommand{\eg}{{\it eg}}
\newtheorem{lemma}{Lemma}
\newtheorem*{lemma*}{Lemma}
\newtheorem{theorem}{Theorem}
\newtheorem{definition}{Definition}
\newtheorem{remark}{Remark}
\author{Tom Zahavy \\ DeepMind, London\\tomzahavy@deepmind.com \And Brendan O'Donoghue \\ DeepMind, London\\bodonoghue@deepmind.com\\ \AND Guillaume Desjardins \\ DeepMind, London\\gdesjardins@deepmind.com \\ \And Satinder Singh \\ DeepMind, London \\ baveja@deepmind.com}
\title{Reward is Enough for Convex MDPs}
\begin{document}

\maketitle

\begin{abstract}
Maximising a cumulative reward function that is Markov and stationary, \ie, defined over state-action pairs and independent of time, is sufficient to capture many kinds of goals in a Markov decision process (MDP). However, not all goals can be captured in this manner. In this paper we study convex MDPs in which goals are expressed as convex functions of the stationary distribution and show that they cannot be formulated using stationary reward functions. Convex MDPs generalize the standard reinforcement learning (RL) problem formulation to a larger framework that includes many supervised and unsupervised RL problems, such as apprenticeship learning, constrained MDPs, and so-called `pure exploration'. Our approach is to reformulate the convex MDP problem as a min-max game involving policy and cost (negative reward) `players', using Fenchel duality. We propose a meta-algorithm for solving this problem and show that it unifies many existing algorithms in the literature. 
\end{abstract}

\section{Introduction}
\label{sec:problem}
In reinforcement learning (RL), an agent learns how to map situations to actions so
as to maximize a cumulative scalar reward signal. The learner is not told which actions to take, but instead must discover which actions lead to the most reward \citep{sutton2018reinforcement}. Mathematically, the RL problem can be written as finding a policy whose state occupancy has the largest inner product with a reward vector \citep{puterman2014markov}, \ie, the goal of the agent is to solve
\begin{align}
\label{eq:rl-prob}
\text{RL:}\quad & \max_{\dpi \in \mathcal{K}} \sum_{s,a} r(s,a) \dpi(s,a),\\
\shortintertext{where $\dpi$ is the state-action stationary distribution induced by policy $\pi$ and $\mathcal{K}$ is the set of admissible stationary distributions (see \cref{def:polytope}).
A significant body of work is dedicated to solving the RL problem efficiently in challenging domains \cite{mnih2015human,silver2017mastering}.
However, not all decision making problems of interest take this form. In particular we consider the more general \emph{convex} MDP problem,}
\label{eq:cvx-prob}
\text{Convex MDP:}\quad & \min_{\dpi \in \mathcal{K}}f(\dpi),
\end{align}
where $f : \mathcal{K} \rightarrow \reals$ is a convex function. Sequential decision making problems that take this form include Apprenticeship Learning (AL), pure exploration, and constrained MDPs, among others; see Table \ref{table:c_MDPs}. In this paper we prove the following claim: 
\begin{center}
\emph{We can solve \cref{eq:cvx-prob} by using any algorithm that solves \cref{eq:rl-prob} as a subroutine}.
\end{center}

In other words, any algorithm that solves the standard RL problem can be used to solve the more general convex MDP problem. More specifically, we make the following contributions.

\textbf{Firstly}, we adapt the meta-algorithm of Abernethy and Wang \citep{abernethy2017frankwolfe} for solving \cref{eq:cvx-prob}. The key idea is to use Fenchel duality to convert the convex MDP problem into a two-player zero-sum game between the agent (henceforth, \emph{policy player}) and an adversary that produces rewards (henceforth, \emph{cost player}) that the agent must maximize \citep{abernethy2017frankwolfe, agrawal2014fast}. From the agent's point of view, the game is bilinear, and so for fixed rewards produced by the adversary the problem reduces to the standard RL problem with non-stationary reward (\cref{fig:convex_MDP}). 

\begin{wrapfigure}{r}{0.5\textwidth}
\centering
\includegraphics[width=.8\linewidth]{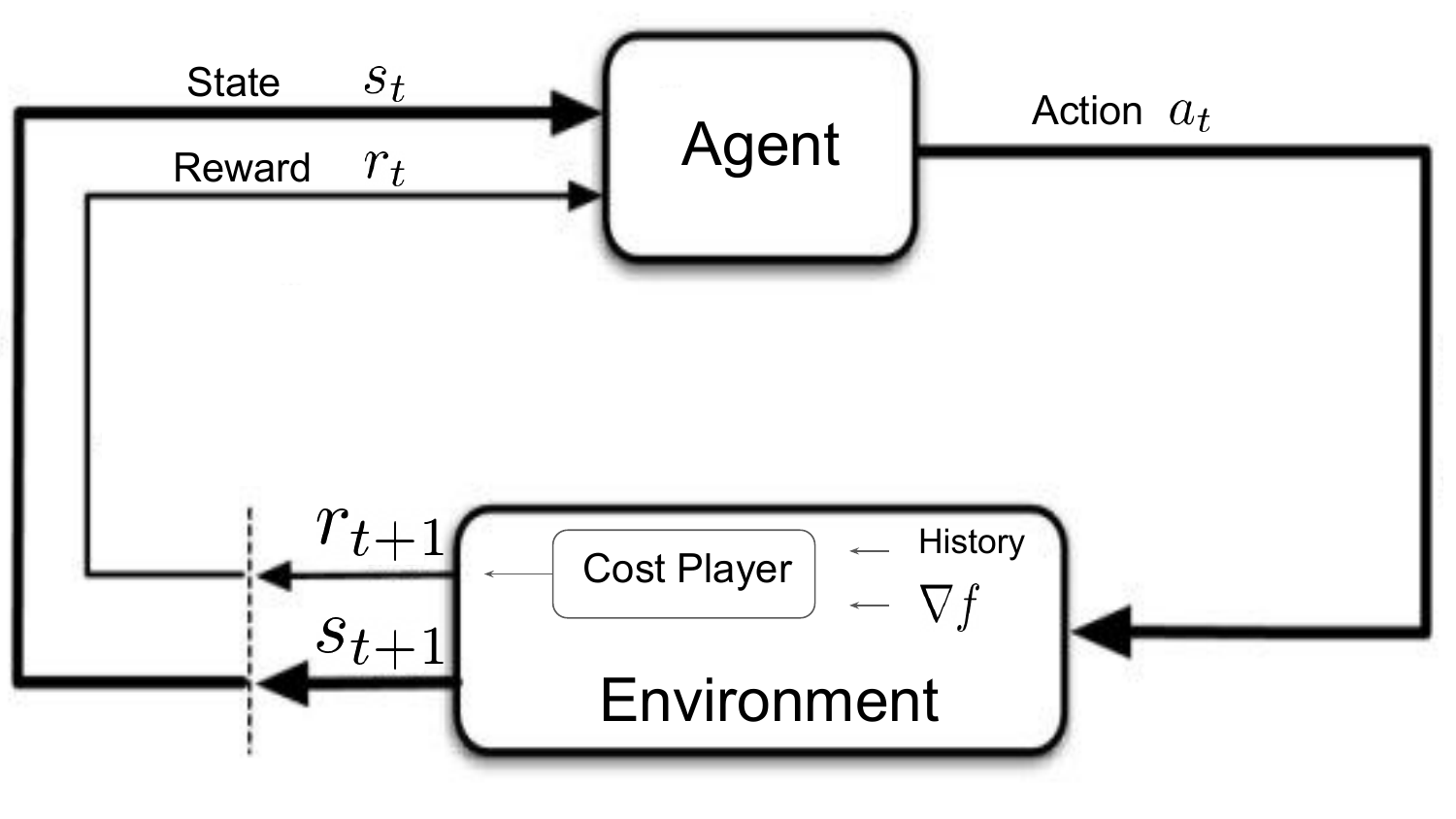}
\caption{Convex MDP as an RL problem}
\label{fig:convex_MDP}
\end{wrapfigure} 
\textbf{Secondly}, 
we propose a sample efficient policy player that uses a standard RL algorithm (\eg, \citep{jaksch2010near,shani2020optimistic}), and computes an optimistic policy with respect to the non-stationary reward at each iteration. In other words, we use algorithms that were developed to achieve low regret in the standard RL setup, to achieve low regret as policy players in the min-max game we formulate to solve the convex MDP. 
Our main result is that the average of the policies produced by the policy player converges to a solution to the convex MDP problem (\cref{eq:cvx-prob}).
Inspired by this principle, we also propose a recipe for using deep-RL (DRL) agents to solve convex MDPs heuristically: provide the agent non-stationary rewards from the cost player. We explore this principle in our experiments.

\textbf{Finally}, we show that choosing specific algorithms for the policy and cost players unifies several disparate branches of RL problems, such as apprenticeship learning, constrained MDPs, and pure exploration into a single framework, as we summarize in \cref{table:c_MDPs}.

\begingroup
\begin{table}[h]
\resizebox{\textwidth}{!}{
    \centering
    \begin{tabular}{llll}
        \toprule
        Convex objective $f$ & Cost player & Policy player & Application  \\
        \midrule
        $\cost \cdot \dpi$ &  FTL & RL & (Standard) RL with $-\cost$ as stationary reward function\\
        $||\dpi- d_E||_2^2$ &  FTL &  Best response & Apprenticeship learning (AL) \citep{abbeel2004apprenticeship,zahavy2019apprenticeship}\\
        $\dpi \cdot \log(\dpi)$ &  FTL &  Best response  & Pure exploration$^*$ \citep{hazan2019provably}\\
        $||\dpi- d_E||_\infty$ & OMD &  Best response & AL \citep{syed2008apprenticeship,syed2008game} \\
        $\mathbb{E}_c \left[\lambda c \cdot (\dpi - d_E(c))\right]^\dagger$ & OMD &  Best response & Inverse RL in contextual MDPs \citep{belogolovsky2019inverse}\\
        $\cost_1 \cdot \dpi, \enspace \text{s.t.} \enspace \cost_2 \cdot \dpi \le c$  & OMD & RL & Constrained MDPs \citep{altman1999constrained,cmdpblog,borkar2005actor,tessler2018reward,efroni2020exploration,calian2021balancing,bhatnagar2012online}\\
        $\text{dist}(\dpi, C)^{\dag\dag}$ & OMD &  Best response & Feasibility of convex-constrained MDPs \citep{miryoosefi2019reinforcement}\\
        $\min_{\cost_1,\ldots,\cost_k} \dpik\cdot\cost_k$&OMD & RL & Adversarial Markov Decision Processes \citep{rosenberg2019online}\\
        $\max_{\cost\in \costset} \enspace \cost \cdot (\dpi - d_E)$& OMD & RL &  Online AL \citep{shani2021online},Wasserstein GAIL \citep{xiao2019wasserstein,zhang2020wasserstein}\\ 
         $\text{KL}(\dpi || d_E)$ &  FTL & RL &GAIL \citep{ho2016generative}, state marginal matching \citep{lee2019efficient},\\
         $-\mathbb{E}_z \text{KL}(\dpiz || \mathbb{E}_k \dpik) ^\ddag$ & FTL& RL & Diverse skill discovery  \citep{gregor2016variational,eysenbach2018diversity, hausman2018learning,florensa2016stochastic,tirumala2020behavior,achiam2018variational}\\ 
        \bottomrule\\
  \end{tabular}
}
\caption{Instances of \cref{alg:meta} in various convex MDPs. $^*$ as well as other KL divergences. $^\dagger$ $c$ is a context variable. $^{\dag\dag}$ $C$ is a convex set. $^\ddag$ $f$ is concave.  See Sections \ref{sec:players} $\&$ \ref{sec:examples} for more details. }
\label{table:c_MDPs}
\end{table}
\endgroup


\section{Reinforcement Learning Preliminaries}
In RL an agent interacts with an environment over a number of time steps and seeks to maximize its cumulative reward. We consider two cases, the average reward case and the discounted case.
The Markov decision process (MDP) is defined by the tuple $(S,A,P,R)$ for the average reward case and by the tuple $(S, A,P,R, \gamma, d_0)$ for the discounted case. We assume an infinite horizon, finite state-action problem where initially, the state of the agent is sampled according to $s_0 \sim d_0$, then at
each time $t$ the agent is in state $s_t \in S$, selects action $a_t \in A$ according to some policy  $\pi(s_t, \cdot)$, receives reward $r_t \sim R(s_t, a_t)$ and transitions to new state $s_{t+1} \in S$ according to the probability distribution $P(\cdot, s_t, a_t)$. The two performance metrics we consider are given by
\begin{equation}
\label{eq:agent_obj}
J^\mathrm{avg}_\pi = \lim_{T\rightarrow \infty} \frac{1}{T} \Expect\sum_{t=1}^T r_t, \quad
J^\mathrm{\gamma}_\pi =(1-\gamma)\Expect \sum_{t=1}^\infty \gamma^t r_t,
\end{equation}
for the average reward case and discounted case respectively.
The goal of the agent is to find a policy that maximizes $J^\mathrm{avg}_\pi$ or $J^\mathrm{\gamma}_\pi$. Any stationary policy $\pi$ induces a \emph{state-action occupancy measure} $\dpi$, which measures how often the agent visits each state-action when following $\pi$. 
Let $\Prob_\pi(s_t=\cdot)$ be the probability measure over states at time $t$ under policy $\pi$, then
\[
\dpi^{\mathrm{avg}}(s,a) = \lim_{T\rightarrow \infty}\frac{1}{T}\Expect\sum_{t=1}^T \Prob_\pi(s_t = s) \pi(s,a), \enspace \dpi^{\mathrm{\gamma}}(s,a) =  (1-\gamma)\Expect\sum_{t=1}^\infty \gamma^t \Prob_\pi(s_t = s) \pi(s,a),
\]
for the average reward case and the discounted case respectively.
With these, we can rewrite the RL objective in \cref{eq:agent_obj} in terms of the occupancy measure using the following well-known result, which for completeness we prove in \cref{sec:propone}.
\begin{restatable}{prop}{propone}
\label{propone}
For both the average and the discounted case, the agent objective function \cref{eq:agent_obj} can be written in terms of the occupancy
measure as $J_\pi = \sum_{s,a} r(s,a) \dpi(s,a)$.
\end{restatable}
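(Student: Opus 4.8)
The plan is to handle both cases in parallel, since the only structural difference is the operator applied across time: a Cesàro-type average $\lim_{T\to\infty}\frac{1}{T}\sum_{t=1}^T$ in the average case versus a discounted sum $(1-\gamma)\sum_{t=1}^\infty\gamma^t$ in the discounted case. In each case the aggregate over time is linear and commutes, under mild regularity, with expectation and with the (finite) sum over state-action pairs. The starting observation is that the instantaneous reward enters the objective only through its conditional mean, so I would first set $r(s,a) \defn \Expect[R(s,a)]$ and apply the tower property to write $\Expect[r_t] = \Expect\big[\Expect[r_t \mid s_t, a_t]\big] = \Expect[r(s_t,a_t)]$. This step replaces the random reward $r_t$ with the deterministic expected-reward function $r(\cdot,\cdot)$ that appears in the target identity.

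Next I would expand $\Expect[r(s_t,a_t)]$ over the finite state-action space as $\sum_{s,a} r(s,a)\,\Prob_\pi(s_t=s,\,a_t=a)$, and then factor the joint law using the fact that, given the current state, the action is drawn from the policy: $\Prob_\pi(s_t=s,\,a_t=a) = \Prob_\pi(s_t=s)\,\pi(s,a)$. Substituting this into each of the two objectives in \cref{eq:agent_obj} yields, for the average case,
\[
J^{\mathrm{avg}}_\pi \;=\; \lim_{T\to\infty}\frac{1}{T}\sum_{t=1}^{T}\;\sum_{s,a} r(s,a)\,\Prob_\pi(s_t=s)\,\pi(s,a),
\]
and the analogous expression with $(1-\gamma)\sum_{t=1}^\infty \gamma^t$ in place of the Cesàro average for the discounted case. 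The final move is to interchange the finite sum $\sum_{s,a}$ with the time-aggregation operator, pull $r(s,a)$ outside, and recognise the bracketed time-aggregate of $\Prob_\pi(s_t=s)\pi(s,a)$ as exactly the occupancy measures $\dpi^{\mathrm{avg}}(s,a)$ and $\dpi^{\gamma}(s,a)$ defined in the excerpt, giving $J_\pi = \sum_{s,a} r(s,a)\,\dpi(s,a)$ in both cases.

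The step I expect to be the main obstacle — the only point that is more than bookkeeping — is justifying the interchanges of summation, limit, and expectation. For the discounted case I would invoke Fubini-Tonelli (or dominated convergence) using boundedness of the reward together with $\gamma<1$, so that $\sum_{t}\gamma^t\Expect|r_t| < \infty$ and expectation may be swapped with the infinite time-sum; the outer sum over $(s,a)$ is finite and hence unproblematic. For the average case, the delicate point is that the Cesàro limit defining $J^{\mathrm{avg}}_\pi$ must exist and must coincide with the per-coordinate limits defining $\dpi^{\mathrm{avg}}$; here finiteness of $S\times A$ is what saves us, since a finite sum of convergent sequences converges to the sum of the limits, allowing $\sum_{s,a}$ and $\lim_{T\to\infty}\frac{1}{T}\sum_{t=1}^T$ to be exchanged coordinatewise. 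I would state the standing assumption (bounded rewards, finite $S$ and $A$, and existence of the stationary occupancy measure) explicitly at the outset so that both interchanges are covered uniformly.
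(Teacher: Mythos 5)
Your proposal is correct and takes essentially the same route as the paper's proof: replace $r_t$ by its conditional mean $r(s,a)$, factor the law at time $t$ as $\Prob_\pi(s_t=s)\,\pi(s,a)$, exchange the time aggregation (discounted sum or Ces\`aro average) with the finite sum over $(s,a)$, and identify the resulting time aggregate with $\dpi$. The only difference is that you make explicit the Fubini/dominated-convergence and finite-sum-of-limits justifications that the paper carries out silently.
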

Given an occupancy measure it is possible to recover the policy by setting $\pi(s,a) = \dpi(s,a) / \sum_a \dpi(s,a)$ if $\sum_a \dpi(s,a)>0$, and $\pi(s,a) = 1/|A|$ otherwise. Accordingly, in this paper we shall formulate the RL problem using the state-action occupancy measure, and both the standard RL problem (\cref{eq:rl-prob}) and the convex MDP problem (\cref{eq:cvx-prob}) are convex optimization problems in variable $d_\pi$. 
For the purposes of this manuscript we do not make a distinction between the average and discounted settings, other than through the convex polytopes of feasible occupancy measures, which we define next.
\begin{definition} [State-action occupancy's polytope \cite{puterman2014markov}]
 \label{def:polytope}
 For the average reward case the set of admissible state-action occupancies is
\begin{align*}
 \mathcal{K}_\mathrm{avg} &= \Huge\{\dpi \mid \dpi \geq 0,\ \sum_{s,a}\dpi(s,a) = 1,\ \sum_a \dpi(s,a) = \sum_{s^\prime, a^\prime} P(s, s^\prime, a^\prime) \dpi(s^\prime, a^\prime) \  \forall s \in S\Huge\},\\
\shortintertext{and for the discounted case it is given by}
 \mathcal{K}_\gamma &= \Huge\{\dpi \mid \dpi \geq 0,\ \sum_a \dpi(s,a) = (1-\gamma)d_0(s) + \gamma\sum_{s^\prime, a^\prime} P(s, s^\prime, a^\prime) \dpi(s^\prime, a^\prime) \  \forall s \in S \Huge\}.
\end{align*}
\end{definition}
We note that being a polytope implies that $\mathcal{K}$ is a convex and compact set.

 
The convex MDP problem is defined for the tuple $(S, A, P, f)$ in the average cost case and $(S, A, P, f, \gamma, d_0)$ in the discounted case. This tuple is defining a state-action occupancy's polytope $\mathcal{K}$ (\cref{def:polytope}), and the problem is to find a policy $\pi$ whose state occupancy $\dpi$ is in this polytope and minimizes the function $f$ (\cref{eq:cvx-prob}).

\section{A Meta-Algorithm for Solving Convex MDPs via RL}
\label{sec:problem_formulation}
To solve the convex MDP problem (\cref{eq:cvx-prob}) we need to find an occupancy measure $\dpi$ (and associated policy) that minimizes the function $f$. Since both $f: \mathcal{K} \rightarrow \reals$ and the set $\mathcal{K}$ are convex this is a convex optimization problem. However, it is a challenging one due to the nature of learning about the environment through stochastic interactions. In this section we show how to reformulate the convex MDP problem (\cref{eq:cvx-prob}) so that standard RL algorithms can be used to solve it, allowing us to harness decades of work on solving vanilla RL problems. To do that we will need the following definition.
\begin{definition}[Fenchel conjugate]
\label{def:conj}
For a function $f:\reals^n\rightarrow \reals \cup \{-\infty, \infty\}$, its Fenchel conjugate is denoted $f^*:\reals^n \rightarrow \reals \cup \{-\infty, \infty\}$ and defined as $f^*(x):= \sup_{y} x\cdot y - f(y).$ 
\end{definition}
\begin{remark}
\label{remark}
The Fenchel conjugate function $f^*$ is always convex (when it exists) even if $f$ is not. Furthermore, the biconjugate $f^{**} \coloneqq (f^*)^*$ equals $f$ if and only if $f$ is convex and lower semi-continuous.
\end{remark}

Using this we can rewrite the convex MDP problem (\cref{eq:cvx-prob}) as
\begin{equation}
\label{eq:cvx-prob_fenchel_dual}
f^{\text{OPT}} = \min_{\dpi \in \mathcal{K}}f(\dpi) = \min_{\dpi \in \mathcal{K}} \max_{\cost \in \Lambda} \left( \cost\cdot \dpi - f^*(\cost) \right)
= \max_{\cost \in \Lambda} \min_{\dpi \in \mathcal{K}} \left( \cost\cdot \dpi - f^*(\cost) \right)
\end{equation}
where $\Lambda$ is the closure of (sub-)gradient space $\{\partial f(\dpi)| \dpi \in \mathcal{K} \}$, which is a convex set \citep[Theorem 4]{abernethy2017frankwolfe}. As both sets are convex, this is a convex-concave saddle-point problem and a zero-sum two-player game \cite{osborne1994course, o2020stochastic}, and we were able to swap the order of minimization and maximization using the minimax theorem \cite{neumann1928theorie}. 




With this we define the Lagrangian as $\g(\dpi, \cost) \coloneqq \cost\cdot \dpi - f^*(\cost)$.
For a fixed $\cost\in\Lambda$, minimizing the Lagrangian is a standard RL problem of the form of \cref{eq:rl-prob}, \ie, equivalent to maximizing a reward $r=-\lambda$. Thus, one might hope that by producing an optimal dual variable $\cost^\star$ we could simply solve $\dpi^\star = \argmin_{\dpi \in \mathcal{K}} \g(\cdot, \cost^\star)$ for the optimal occupancy measure. However, the next lemma states that this is not possible in general.
\begin{lemma}
\label{lemma:reward_enough}
There exists an MDP $M$ and convex function $f$ for which there is no stationary reward $r \in \reals^{S \times A}$
such that $\arg\max _{\dpi \in \mathcal{K}} \dpi\cdot r = \arg\min_{\dpi \in \mathcal{K}} f(\dpi)$. 
\end{lemma}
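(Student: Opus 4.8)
The plan is to prove the statement by exhibiting an explicit counterexample, leveraging the structural mismatch between linear and strictly convex objectives over a polytope. The guiding observation is that a linear functional $\dpi \mapsto \dpi\cdot r$ attains its maximum over the compact convex polytope $\mathcal{K}$ on a \emph{face} of $\mathcal{K}$, and the only face that contains a point from the relative interior of $\mathcal{K}$ is $\mathcal{K}$ itself. By contrast, a strictly convex $f$ has a \emph{unique} minimizer over $\mathcal{K}$, and this minimizer can be placed in the relative interior. These two facts are incompatible, and that is exactly what I would exploit.

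Concretely, first I would fix the smallest nontrivial MDP: a single state $s$ with two actions $a_1, a_2$ and the only available transition returning to $s$. For the average-reward case the occupancy constraints in \cref{def:polytope} reduce to $\dpi \ge 0$ and $\dpi(s,a_1)+\dpi(s,a_2)=1$, so $\mathcal{K}$ is the one-dimensional probability simplex --- a line segment whose relative interior is the open segment and whose only vertices are $(1,0)$ and $(0,1)$. Second, I would choose the strictly convex objective $f(\dpi) = \|\dpi - (1/2,1/2)\|_2^2$, so that $\arg\min_{\dpi\in\mathcal{K}} f(\dpi) = \{d^\star\}$ with $d^\star = (1/2,1/2)$ lying in the relative interior of $\mathcal{K}$. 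This is precisely the apprenticeship-learning objective from \cref{table:c_MDPs} with an expert occupancy in the interior, which keeps the example faithful to the paper.

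The decisive step is then to show that no stationary reward $r=(r_1,r_2)$ can make $\arg\max_{\dpi\in\mathcal{K}} \dpi\cdot r$ equal the singleton $\{d^\star\}$. I would split into two cases. If $r_1 = r_2$, the map $\dpi\mapsto \dpi\cdot r$ is constant on $\mathcal{K}$, so its argmax is all of $\mathcal{K}$, which is not a single point. If $r_1\ne r_2$, the maximum is attained uniquely at one of the two vertices, neither of which equals the interior point $d^\star$. In either case $\arg\max_{\dpi} \dpi\cdot r \ne \{d^\star\} = \arg\min_{\dpi} f(\dpi)$, which proves the claim; the same argument applies verbatim in the discounted case once the single-state polytope is written out.

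The main thing to get right --- the only real obstacle --- is the interior-versus-face dichotomy together with the degenerate reward $r_1=r_2$: one must verify that an interior minimizer genuinely cannot coincide with the argmax of \emph{any} linear functional, including the constant one whose argmax is the entire set. Stated abstractly, the argument is that whenever the unique minimizer of a strictly convex $f$ lies in the relative interior of $\mathcal{K}$, it can never be the maximizer set of a linear objective, since a linear functional maximized at a relative-interior point must be constant on $\mathcal{K}$; the concrete two-action MDP is merely the minimal witness that such a configuration is realizable.
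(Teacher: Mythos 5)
Your proof is correct, and it rests on the same structural dichotomy as the paper's, but it executes it differently. The paper's own argument is a one-line abstract one: for any stationary reward there exists a deterministic policy that is optimal (citing \citet{puterman2014markov}), yet for some convex $f$ --- the paper's example is the negative entropy --- no deterministic policy is optimal, so the argmax of $\dpi \cdot r$ (which always contains a deterministic policy's occupancy) can never equal the argmin of $f$. You instead exhibit a fully explicit, self-contained counterexample: a one-state, two-action MDP whose polytope $\mathcal{K}$ is the $1$-simplex, the strictly convex $f(\dpi) = \|\dpi - (1/2,1/2)\|_2^2$ with unique interior minimizer, and an exhaustive case split on $r_1 = r_2$ versus $r_1 \neq r_2$. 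What your route buys: no appeal to the general theorem on deterministic optimal policies, a verification that the polytope and the objective are realizable exactly as claimed, and explicit handling of the degenerate constant-reward case where the argmax is all of $\mathcal{K}$ --- a case the paper's argument covers only implicitly (a deterministic occupancy lies in the argmax but not in the singleton argmin). What the paper's route buys: brevity and genericity, since its argument shows the obstruction arises in any MDP for any $f$ whose minimizers exclude all deterministic policies, rather than in one minimal witness.
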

To see this note that for any reward $r$ there is a deterministic policy that optimizes the reward \citep{puterman2014markov}, but for some choices of $f$ no deterministic policy is optimal, \eg, when $f$ is the negative entropy function.
This result tells us that even if we have access to an optimal dual-variable we cannot simply use it to recover the stationary distribution that solves the convex MDP problem in general. 

To overcome this issue we develop an algorithm that generates 
a \emph{sequence} of 
policies $\{\pi^k\}_{k \in \mathbb{N}}$
such that the average converges to an optimal policy for \cref{eq:cvx-prob}, \ie, $(1/K)\sum_{k=1}^K \dpik \rightarrow \dpi^\star \in \arg\min_{\dpi \in \mathcal{K}} f(\dpi)$. 
The algorithm we develop is described in \cref{alg:meta} and is adapted from the meta-algorithm described in \citet{abernethy2017frankwolfe}. It is referred to as a \emph{meta-algorithm} since it
relies on supplied sub-routine algorithms $\rlalg$ and $\costalg$. The reinforcement learning algorithm $\rlalg$ takes as input a reward vector and returns a state-action occupancy measure $\dpi$. The cost algorithm $\costalg$ can be a more general function of the entire history. We discuss concrete examples of $\rlalg$ and $\costalg$ in \cref{sec:players}. 
\begin{algorithm}[H]
\begin{algorithmic}[1]
\STATE \textbf{Input:} convex-concave payoff $\g : \mathcal{K} \times \costset \rightarrow \mathcal{R},$ algorithms $\costalg,\rlalg$, $K \in \mathbb{N}$
\FOR{$k = 1,\ldots,K$}
  \STATE $\cost^k =  \costalg(\dpione, \ldots, \dpikm; \g)$
  \STATE $\dpik = \rlalg(-\cost^k)$\quad 
\ENDFOR
\STATE Return $\bdpi^K = \frac{1}{K}\sum_{k=1}^K \dpik ,\bc^K = \frac{1}{K}\sum_{k=1}^K \cost^k$
\end{algorithmic}
\caption{meta-algorithm for convex MDPs }
\label{alg:meta}
\end{algorithm}

In order to analyze this algorithm we will need a small detour into online convex optimization (OCO). In OCO, a learner is presented with a sequence of $K$ convex loss functions $\ell_1, \ell_2, \ldots , \ell_K : \mathcal{K} \rightarrow \mathbb{R}$ and at each round $k$ must select a point $x_k \in \mathcal{K}$ after which it suffers a loss of $\ell_k(x_k)$. At time period $k$ the learner is assumed to have perfect knowledge of the loss functions $\ell_1, \ldots, \ell_{k-1}$. The learner wants to minimize its \emph{average regret}, defined as
\[
\regret \coloneqq \frac{1}{K} \left(\sum^K_{k=1} \ell_k(x_k) - \min_{x\in\mathcal{K}}\sum^K_{k=1}\ell_k(x)\right).
\]
In the context of convex reinforcement learning and meta-algorithm \ref{alg:meta}, the loss functions for the cost player  are $\ell_\mathrm{\cost}^k = -\g(\cdot, \cost^k)$, and for the policy player are $\ell_\mathrm{\pi}^k = \g(\dpik, \cdot)$, with associated average regrets $\regret^\mathrm{\pi}$ and $\regret^\mathrm{\cost}$. This brings us to the following theorem.
\begin{theorem}[Theorem 2, \citep{abernethy2017frankwolfe}]
\label{thm:meta}
Assume that $\rlalg$ and $\costalg$ have guaranteed average regret bounded as $\regret^\mathrm{\pi} \leq \epsilon_K$ and $\regret^\mathrm{\cost} \leq \delta_K$, respectively.
Then \cref{alg:meta} outputs $\bdpi^K$ and $\bc^K$ satisfying $\min_{\dpi \in\mathcal{K}} \g(\dpi, \bc^K) \ge f^{\text{OPT}} -\epsilon_K - \delta_K$  and $\max_{\cost\in\costset}\g(\bdpi^K, \cost) \le f^{\text{OPT}} +\epsilon_K + \delta_K$.
\end{theorem}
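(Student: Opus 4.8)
The plan is to derive both inequalities directly from the two no-regret guarantees, exploiting the fact that the payoff $\g(\dpi,\cost)=\cost\cdot\dpi-f^*(\cost)$ is \emph{affine} in $\dpi$ and \emph{concave} in $\cost$. Write $\bar L=\tfrac1K\sum_{k=1}^K \g(\dpik,\cost^k)$ for the realized average payoff along the trajectory. Reading the loss definitions in the natural game-theoretic way, the policy player's regret bound $\regret^\pi\le\epsilon_K$ says that $\{\dpik\}$ is almost as good, in aggregate, as any fixed occupancy measure against the observed costs:
\begin{equation*}
\bar L \le \min_{\dpi\in\mathcal{K}}\tfrac1K\sum_{k=1}^K \g(\dpi,\cost^k)+\epsilon_K,
\end{equation*}
while the cost player's bound $\regret^\cost\le\delta_K$ gives the symmetric statement for the costs:
\begin{equation*}
\bar L \ge \max_{\cost\in\Lambda}\tfrac1K\sum_{k=1}^K \g(\dpik,\cost)-\delta_K .
\end{equation*}

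First I would collapse the two averaged payoffs onto the returned averages $\bdpi^K$ and $\bc^K$. Because $\g$ is affine in its first argument, $\tfrac1K\sum_k \g(\dpik,\cost)=\g(\bdpi^K,\cost)$ \emph{exactly}, so the cost player's inequality becomes $\bar L\ge \max_{\cost}\g(\bdpi^K,\cost)-\delta_K$. Because $\g$ is concave in its second argument, Jensen's inequality gives $\tfrac1K\sum_k \g(\dpi,\cost^k)\le \g(\dpi,\bc^K)$, so the policy player's inequality becomes $\bar L\le \min_{\dpi}\g(\dpi,\bc^K)+\epsilon_K$. These are exactly the two quantities appearing in the theorem, now sandwiching $\bar L$.

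It then remains to bound $\bar L$ itself against $f^{\text{OPT}}$ using weak duality together with the minimax value established in \cref{eq:cvx-prob_fenchel_dual}. Since $\bc^K\in\Lambda$, we have $\min_\dpi \g(\dpi,\bc^K)\le \max_\cost\min_\dpi\g(\dpi,\cost)=f^{\text{OPT}}$, and combining with the collapsed policy inequality yields $\bar L\le f^{\text{OPT}}+\epsilon_K$; feeding this through the collapsed cost inequality gives $\max_\cost \g(\bdpi^K,\cost)\le \bar L+\delta_K\le f^{\text{OPT}}+\epsilon_K+\delta_K$, the second claim. Symmetrically, $\max_\cost\g(\bdpi^K,\cost)\ge \min_\dpi\max_\cost\g(\dpi,\cost)=f^{\text{OPT}}$ forces $\bar L\ge f^{\text{OPT}}-\delta_K$, and the collapsed policy inequality then gives $\min_\dpi \g(\dpi,\bc^K)\ge \bar L-\epsilon_K\ge f^{\text{OPT}}-\epsilon_K-\delta_K$, the first claim.

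There is no deep obstacle here — this is the standard ``no-regret dynamics reach the value of a zero-sum game'' argument — so the real work is bookkeeping. The one point demanding care is the asymmetry between the two players: the policy player's variable enters $\g$ affinely (so averaging the $\dpik$ is lossless and introduces no slack), whereas the cost player's variable enters through the concave term $-f^*$, so Jensen must be applied in the correct direction and only to the cost average $\bc^K$. Matching each regret bound to the correctly averaged iterate, and checking that every inequality still points the right way after taking the subsequent $\min$ or $\max$, is where a sign error would most easily creep in.
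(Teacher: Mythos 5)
Your proof is correct, and it is essentially the standard argument: the paper itself does not prove this theorem (it imports it as Theorem 2 of Abernethy and Wang), and their proof is exactly your no-regret-to-game-value sandwich — collapse the policy average exactly by linearity of $\g$ in $\dpi$, collapse the cost average one-sidedly by Jensen via concavity in $\cost$, then tie the realized average payoff to $f^{\text{OPT}}$ through the minimax identity of \cref{eq:cvx-prob_fenchel_dual}. The only point worth noting is that you (correctly) read the regret definitions in the natural game-theoretic way, silently fixing the paper's swapped notation for $\ell^k_\pi$ and $\ell^k_\cost$, and that averaging the $\cost^k$ stays in $\costset$ because $\costset$ is convex.
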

This theorem tells us that so long as the RL algorithm we employ has guaranteed low-regret, and assuming we choose a reasonable low-regret algorithm for deciding the costs, then the meta-algorithm will produce a solution
to the convex MDP problem (\cref{eq:cvx-prob}) to any desired tolerance, this is because
$f^{\text{OPT}} \leq f(\bdpi^K) = \max_{\lambda} \g(\bdpi^K, \lambda) \leq f^{\text{OPT}} + \epsilon_K + \delta_K$. For example, we shall later present algorithms that have regret bounded as $\epsilon_K = \delta_K \leq O(1 / \sqrt{K})$, in which case
we have
\begin{equation}
\label{thm:meta2}
    f(\bdpi^K) - f^{\text{OPT}} \leq O(1/\sqrt{K}).
\end{equation}

\textbf{Non-Convex $f$.}  
\cref{remark} implies that the game $\max_{\cost \in \Lambda} \min_{\dpi \in \mathcal{K}} \left( \cost\cdot \dpi - f^*(\cost) \right)$ is concave-convex for any function $f$, so we can solve it with \cref{alg:meta}, even for a non-convex $f$. From weak duality  the value of the Lagrangian on the output of \cref{alg:meta}, $\g(\bdpi,\bc$), is a lower bound on the optimal solution $f^{\text{OPT}}$. In addition, since $f(\dpi)$ is always an upper bound on $f^{\text{OPT}}$ we have both an upper bound and a lower bound on the optimal value: $\g(\bdpi,\bc) \le f^{\text{OPT}} \le f(\bdpi)$. 



\section{Policy and Cost Players for Convex MDPs}
\label{sec:players}
In this section we present several algorithms for the policy and cost players that can be used in \cref{alg:meta}. Any combination of these algorithms is valid and will come with different practical and theoretical performance. In \cref{sec:examples} we show that several well known methods in the literature correspond to particular choices of cost and policy players and so fall under our framework.

In addition, in this section we assume that $$\cost_\mathrm{max} = \max_{\lambda \in \Lambda} \max_{s,a} |\lambda (s,a)| < \infty,$$ which holds when the set $\Lambda$ is compact. One way to guarantee that $\Lambda$ is compact is to consider functions $f$ with Lipschitz continuous gradients (which implies bounded gradients since the set $\mathcal{K}$ is compact). For simplicity, we further assume that $\cost_\mathrm{max}\le1.$ By making this assumption we assure that the non stationary rewards produced by the cost player are bounded by $1$ as is usually done in RL.



\subsection{Cost Player}

\textbf{Follow the Leader (FTL)} is a classic OCO algorithm that selects $\cost_k$ to be the best point in hindsight. In the special case of convex MDPs, as defined in \cref{eq:cvx-prob_fenchel_dual}, FTL has a simpler form: 
\begin{equation}
\label{eq:cost_ftl}
   \cost^k = \arg\max_{\cost \in \Lambda} \sum\nolimits_{j=1}^{k-1} \g(\dpij, \cost) =   \arg\max_{\cost \in \Lambda} \left(\cost \cdot \sum\nolimits_{j=1}^{k-1}\dpij - K f^*(\cost)\right) = \nabla f(\bdpikm),
\end{equation}
where $\bdpikm=\sum\nolimits_{j=1}^{k-1}\dpij$ and the last equality follows from the fact that $(\nabla f^*)^{-1} = \nabla f$ \citep{rockafellar1970convex}.  The average regret of FTL is guaranteed to be $\regret \le c / \sqrt{K}$ under some assumptions \citep{hazan2007logarithmic}. In some cases, and specifically when the set $\mathcal{K}$ is a polytope and the function $f$ is strongly convex, FTL can enjoy logarithmic or even constant regret; see \citep{huang2016following,hazan2007logarithmic} for more details.

\textbf{Online Mirror Descent (OMD)} uses the following update \cite{nemirovskij1983problem,beck2003mirror}:
$$\cost^k = \arg\max_{\cost \in \Lambda} \left( (\cost - \cost^{k-1})\cdot \nabla_{\cost} \g(\dpikm,\cost^{k-1})+ \alpha_k B_r(\cost,\cost^{k-1})\right),$$
where $\alpha_k$ is a learning rate and $B_r$ is a Bregman divergence \citep{bregman1967relaxation}. For $B_r(x) = 0.5||x||^2_2,$ we get online gradient descent \citep{zinkevich2003online} and for $B_r(x) = x \cdot \log(x)$ we get multiplicative weights \citep{freund1997decision} as special cases. We also note that OMD is equivalent to a linearized version of Follow the Regularized Leader (FTRL) \citep{mcmahan2011follow,hazan2016introduction}. The average regret of OMD is $\regret \le c / \sqrt{K}$ under some assumptions, see, for example \citep{hazan2016introduction}.


\subsection{Policy Players}

\subsubsection{Best Response} In OCO, the best response is to simply ignore the history and play the best option on the current round, which has guaranteed average regret bound of $\regret \le 0$ (this requires knowledge of the \emph{current} loss function, which is usually not applicable but is in this case). When applied to \cref{eq:cvx-prob_fenchel_dual}, it is possible to find the best response $\dpik$ using standard RL techniques since 
$$
    \dpik = \arg \min_{\dpi \in\mathcal{K}} \g_k(\dpi, \cost^k) =   \arg \min_{\dpi \in\mathcal{K}} \dpi\cdot \cost^k - f^*(\cost^k) = \arg \max_{\dpi \in\mathcal{K}} \dpi\cdot (-\cost^k),
$$
which is an RL problem for maximizing the reward $(-\cost^k)$.
In principle, any RL algorithm that eventually solves the RL problem can be used to find the best response, which substantiates our claim in the introduction.
For example, tabular Q-learning executed for sufficiently long and with a suitable exploration strategy will converge to the optimal policy \cite{watkins1992q}. In the non-tabular case we could parameterize a deep neural network to represent the Q-values \cite{mnih2015human} and if the network has sufficient capacity then similar guarantees might hold. We make no claims on efficiency or tractability of this approach, just that in principle such an approach would provide the best-response at each iteration and therefore satisfy the required conditions to solve the convex MDP problem. 

\subsubsection{Approximate Best Response}  The caveat in using the best response as a policy player is that in practice, it can only be found approximately by executing an RL algorithm in the environment. This leads to defining an approximate best response via the  Probably Approximately Correct (PAC) framework. We say that a policy player is PAC$(\epsilon,\delta)$, if it finds an $\epsilon$-optimal policy to an RL problem with probability of at least $1-\delta$. In addition, we say that a policy $\pi'$ is $\epsilon$-optimal if its state occupancy $\dpi'$ is such that $$\max_{\dpi \in\mathcal{K}} \dpi\cdot (-\cost^k) - \dpi'\cdot (-\cost^k) \le \epsilon.$$ 

For example, the algorithm in \citep{lattimore2012pac} can find an $\epsilon$-optimal policy to the discounted RL problem after seeing $O\big(\frac{SA}{(1-\gamma)^3\epsilon^2}\log(\frac{1}{\delta})\big)$ samples; and the algorithm in \citep{jin2020efficiently} can find an $\epsilon$-optimal policy for the average reward RL problem after seeing $O\big(\frac{t_{\text{mix}}^2SA}{\epsilon^2}\log(\frac{1}{\delta})\big)$ samples, where $t_{\text{mix}}$ is the mixing time (see, \eg, \citep{levin2017markov, zahavy2019average} for a formal definition). 
The following Lemma analyzes the sample complexity of \cref{alg:meta} with an approximate best response policy player for the average reward RL problem \cite{jin2020efficiently}. The result can be easily  extended to the discounted case using the algorithm in \citep{lattimore2012pac}.
Other relaxations to the best response for specific algorithms can be found in \citep{syed2008game,miryoosefi2019reinforcement,jaggi2013revisiting,hazan2019provably}.

\begin{lemma}[The sample complexity of approximate best response in convex MDPs with average occupancy measure]
\label{thm:approx_best_response}
For a convex function $f$, running \cref{alg:meta} with an oracle cost player with regret $\regret^\cost = O(1/K)$ and an approximate best response policy player that solves the average reward RL problem in iteration $k$ to accuracy $\epsilon_k=1/k$  returns an occupancy measure $\bdpi^K$ that satisfies $f(\bdpi^K) - f^{\text{OPT}}\le \epsilon$ with probability $1-\delta$ after seeing $O(t_{\text{mix}}^2SA\log(2K/\epsilon\delta)/\epsilon^3\delta^3)$ samples. Similarly, for $\regret^\cost = O(1/\sqrt{K})$, setting $\epsilon_k=1/\sqrt{k}$ requires $O(t_{\text{mix}}^2SA\log(2K/\epsilon\delta)/\epsilon^4\delta^4)$ samples. 
\end{lemma}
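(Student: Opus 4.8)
The plan is to reduce the statement to a regret bound for the policy player, feed that into \cref{thm:meta}, and convert the resulting bound on the \emph{expected} optimality gap into a high-probability bound with Markov's inequality. Fix a per-iteration failure probability $\delta_k$ for the PAC solver of \citep{jin2020efficiently}: iteration $k$ then consumes $n_k = O(t_{\text{mix}}^2 SA\,\epsilon_k^{-2}\log(1/\delta_k))$ samples and, with probability at least $1-\delta_k$, returns a $\dpik$ that is $\epsilon_k$-optimal for the reward $-\cost^k$ produced by the cost player in \cref{alg:meta}. The whole proof is bookkeeping around these quantities, with the one genuinely load-bearing idea being how confidence is purchased.

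First I would bound the policy player's per-round excess. Writing $e_k \coloneqq \g(\dpik,\cost^k) - \min_{\dpi\in\mathcal{K}}\g(\dpi,\cost^k)$, the definition of $\epsilon_k$-optimality gives $e_k \le \epsilon_k$ on the success event of iteration $k$, while on the failure event (probability $\le \delta_k$) the excess is still bounded: since occupancy measures are probability vectors and $\cost_\mathrm{max}\le 1$, one has $e_k = \cost^k\cdot(\dpik - d^\star_k) \le 2\cost_\mathrm{max}\le 2$, where $d^\star_k$ is any minimiser. Because the PAC guarantee holds conditionally on the history (hence on the random $\cost^k$), taking expectations gives $\Expect[e_k]\le \epsilon_k + 2\delta_k$. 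Since $\dpik$ is a per-round best response, $\sum_k\g(\dpik,\cost^k) \le \min_{\dpi}\sum_k\g(\dpi,\cost^k) + \sum_k e_k$, so $\regret^\pi \le \tfrac1K\sum_k e_k$ and therefore $\Expect[\regret^\pi]\le \tfrac1K\sum_k(\epsilon_k + 2\delta_k)$.

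Next I would assemble the gap. The argument behind \cref{thm:meta} in fact yields the pathwise inequality $f(\bdpi^K) - f^{\text{OPT}} \le \regret^\pi + \regret^\cost$, and this quantity is nonnegative. Taking expectations and using $\regret^\cost = O(1/K)$ gives $\Expect[f(\bdpi^K)-f^{\text{OPT}}] \le \tfrac1K\sum_k(\epsilon_k+2\delta_k) + O(1/K)$. Choosing $\delta_k = \epsilon\delta/(2K)$ makes the failure term $O(\epsilon\delta/K)$ negligible, so with $\epsilon_k=1/k$ and $\tfrac1K\sum_k 1/k = O(\log K/K)$ the expected gap is $O(\log K/K)$; taking $K=\Theta(1/(\epsilon\delta))$ up to logarithmic factors drives it below $\epsilon\delta$. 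Markov's inequality then gives $\Prob[f(\bdpi^K)-f^{\text{OPT}}>\epsilon] \le \Expect[f(\bdpi^K)-f^{\text{OPT}}]/\epsilon \le \delta$, the claimed guarantee. The case $\regret^\cost=O(1/\sqrt K)$, $\epsilon_k=1/\sqrt k$ is identical, except that $\tfrac1K\sum_k 1/\sqrt k = O(1/\sqrt K)$ forces $K=\Theta(1/(\epsilon\delta)^2)$.

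Finally I would total the samples. With $\delta_k=\epsilon\delta/(2K)$ each iteration uses $n_k = O(t_{\text{mix}}^2 SA\,\epsilon_k^{-2}\log(2K/\epsilon\delta))$, so $\sum_{k=1}^K n_k = O\big(t_{\text{mix}}^2 SA\log(2K/\epsilon\delta)\sum_k \epsilon_k^{-2}\big)$. For $\epsilon_k=1/k$, $\sum_k k^2 = O(K^3)$, and substituting $K=\Theta(1/\epsilon\delta)$ yields $O(t_{\text{mix}}^2 SA\log(2K/\epsilon\delta)/\epsilon^3\delta^3)$; for $\epsilon_k=1/\sqrt k$, $\sum_k k = O(K^2)$, and $K=\Theta(1/(\epsilon\delta)^2)$ gives $O(t_{\text{mix}}^2 SA\log(2K/\epsilon\delta)/\epsilon^4\delta^4)$, as stated. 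The main obstacle — and the reason the $\delta$-dependence is polynomial rather than merely logarithmic — is precisely the Markov step: to obtain a $(1-\delta)$-confidence statement from a bound on the expected gap one must push that expected gap down to $\epsilon\delta$, which inflates $K$, and hence $\sum_k\epsilon_k^{-2}$, by the extra $1/\delta$ factors. The other point requiring care is justifying the conditional PAC guarantee, since each $\cost^k$ is itself random through its dependence on the earlier iterates, which is what licenses the expectation $\Expect[e_k]\le\epsilon_k+2\delta_k$ used above.
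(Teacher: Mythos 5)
Your proposal is correct and follows essentially the same route as the paper's own proof: per-iteration PAC best responses with failure probability $\epsilon\delta/(2K)$, the pathwise bound $f(\bdpi^K)-f^{\text{OPT}}\le \regret^{\pi}+\regret^{\cost}$, Markov's inequality to purchase the $(1-\delta)$ confidence (which, as you note, is exactly what forces the polynomial $1/\delta$ dependence), and the same choices of $K$ and per-iteration sample counts in both regret regimes. The only cosmetic difference is that you handle the failure events via per-round conditional expectations, whereas the paper takes a union bound over all $K$ iterations and bounds the regret by a constant on the bad event; the bookkeeping and resulting bounds are identical.
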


\subsubsection{Non-Stationary RL Algorithms}
We now discuss a different type of policy players; instead of solving an MDP to accuracy $\epsilon$, these algorithms perform a \emph{single} RL update to the policy, with cost $-\cost_k$. In our setup the reward is known and deterministic but non-stationary, while in the standard RL setup it is unknown, stochastic, and stationary. We conjecture that any RL algorithm can be adapted to the \emph{known} non-stationary reward setup we consider here. In most cases both Bayesian \cite{osband2013more, o2018variational} and frequentist \cite{azar2017minimax, jaksch2010near} approaches to the stochastic RL problem solve a modified (\eg, by adding optimism) Bellman equation at each time period and swapping in a known but non-stationary reward is unlikely to present a problem.

To support this conjecture we shall prove that this is exactly the case for UCRL2 \citep{jaksch2010near}. UCRL2 is an RL algorithm that was designed and analyzed in the standard RL setup, and we shall show that it is easily adapted to the non-stationary but known reward setup that we require. To make this claim more general, we will also discuss a similar result for the MDPO algorithm \cite{shani2021online} that was given in a slightly different setup.

UCRL2 is a model based algorithm that maintains an estimate of the reward and the transition function as well as confidence sets about those estimates. In our case the reward at time $k$ is known, so we only need to consider uncertainty in the dynamics.
UCRL2 guarantees that in any iteration $k$, the true transition function is in a confidence set with high probability, \ie, $P \in\optset$ for confidence set $\optset$.  If we denote by $J^{P,R}_\pi$ the value of policy $\pi$ in an MDP with dynamics $P$ and reward $R$ then the optimistic policy is $\piopt = \arg\max_\pi \max_{P'\in \optset} J^{P',-\cost_k}_\pi$. Acting according to this policy is guaranteed to attain low regret. In the following results for UCRL2 we will use the constant $D,$ which denotes the diameter of the MDP, see \citep[Definition 1]{jaksch2010near} for more details. In the supplementary material (\cref{sec:ucrl_proof}), we provide a proof sketch that closely follows \citep{jaksch2010near}.

\begin{lemma}[Non stationary regret of UCRL2]
\label{lemma:ucrl2}
For an MDP with dynamics $P,$ diameter $D$, an arbitrary sequence of known and bounded rewards $\left\{ r^i: \max_{s,a} |r^i(s,a)|\le 1 \right\}_{i=1}^K,$  such that the optimal average reward at episode $k,$ with respect to $P$ and $r_k$ is $J^\star_k$, then with probability at least $1-\delta$, the average regret of UCRL2 is at most $\regret = \frac{1}{K}\sum_{k=1}^K J^\star_k - J_k^{\piopt} \le O(DS\sqrt{A\log(K/\delta)/K}).$
\end{lemma}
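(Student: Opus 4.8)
The plan is to follow the regret analysis of UCRL2 \citep{jaksch2010near} essentially verbatim, built around three ingredients: a high-probability ``good event'' on which the true dynamics lie in every confidence set, an optimism argument, and a summation of per-episode model errors. The key structural observation — which is what makes the adaptation cheap — is that the \emph{only} source of randomness is the transition dynamics $P$. The rewards $r^k$ are known and deterministic, so replacing the single stationary reward of \citep{jaksch2010near} by the non-stationary sequence $\{r^k\}$ leaves the confidence-set construction and all concentration arguments untouched; the reward sequence enters only through the per-episode value functions, whose relevant quantities I will bound uniformly using $\max_{s,a}|r^k(s,a)|\le 1$.

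First I would fix the confidence sets. For each $(s,a)$ let $N_k(s,a)$ be the number of visits accumulated up to episode $k$ and let $\hat P_k$ be the empirical transition estimate; the set $\optset$ collects all $P'$ with $\norm{P'(\cdot\mid s,a)-\hat P_k(\cdot\mid s,a)}_1\le\beta_k(s,a)$ for all $(s,a)$, with radius $\beta_k(s,a)=O(\sqrt{S\log(AK/\delta)/\max(1,N_k(s,a))})$. Applying the $L_1$ concentration inequality for empirical distributions together with a union bound over all $(s,a)$ and all $k\le K$, the event $\{P\in\optset \text{ for every } k\}$ holds with probability at least $1-\delta$, and I condition on it for the remainder.

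Second, optimism and the per-episode bound. On the good event $P\in\optset$, and since $\piopt,\popt$ are chosen to maximize the average reward over $\optset$, the optimistic value dominates the true optimum, $\tilde J_k:=J^{\popt,r_k}_{\piopt}\ge J^\star_k$, so the per-episode regret obeys $J^\star_k-J_k^{\piopt}\le \tilde J_k - J^{P,r_k}_{\piopt}$, the gap in average reward of the \emph{single} policy $\piopt$ between optimistic and true dynamics. I would bound this through the average-reward Bellman (Poisson) equation of the optimistic MDP: writing $h_k$ for the bias function and $\mu_k$ for the stationary distribution of $\piopt$ under $P$, the gap equals $\sum_s \mu_k(s)\,\big((\popt-P)(\cdot\mid s,\piopt(s))\big)\cdot h_k$. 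Bounding by H\"older and using the diameter span bound $\mathrm{sp}(h_k)\le D$ of \citep{jaksch2010near} (valid precisely because $\max_{s,a}|r_k(s,a)|\le1$), together with $\norm{(\popt-P)(\cdot\mid s,a)}_1\le 2\beta_k(s,a)$ since both transition kernels lie in $\optset$, yields $J^\star_k-J_k^{\piopt}\le D\sum_{s,a}\mu_k(s,a)\,\beta_k(s,a)$.

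Finally I would sum over $k$. Relating the expected occupancy $\mu_k$ to realized visit counts $v_k$ (up to a martingale-difference term controlled by Azuma--Hoeffding) and invoking the pigeonhole/Cauchy--Schwarz estimate $\sum_k\sum_{s,a}v_k(s,a)/\sqrt{N_k(s,a)}=O(\sqrt{SAK})$ gives $\sum_{k=1}^K (J^\star_k-J_k^{\piopt})=O(DS\sqrt{AK\log(K/\delta)})$; the extra factor of $\sqrt{S}$ beyond the pigeonhole term comes from $\beta_k\propto\sqrt{S/N_k}$. Dividing by $K$ produces the stated $O(DS\sqrt{A\log(K/\delta)/K})$ bound. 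The main obstacle I anticipate is not conceptual but bookkeeping: managing the episode/doubling structure so that the visit-count sum telescopes correctly, and justifying the martingale step connecting $\mu_k$ to empirical counts. This is exactly where the $\sqrt{\cdot}$ rate and the factors of $S$ and $D$ are generated, and it must be carried out with the same care as the original proof — the one thing I would explicitly verify is that every such step depends on the dynamics data alone and is therefore insensitive to the non-stationarity of the rewards.
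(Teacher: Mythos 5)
Your first two ingredients are sound and, modulo packaging, match the paper's own proof sketch (\cref{sec:ucrl_proof}): the confidence sets and concentration arguments are indeed untouched by the non-stationary rewards, and the optimism step is essentially the same (the paper obtains $\tilde r_k \ge J^\star_k - 1/\sqrt{t_k}$ from extended value iteration rather than exact optimism). Your Poisson-equation identity $\tilde J_k - J^{P,r_k}_{\piopt} = \sum_s \mu_k(s)\,\big((\popt-P)(\cdot\mid s,\piopt(s))\big)\cdot h_k$ is also correct as an algebraic fact. But this is where your route genuinely diverges from the paper's, and the divergence creates a gap that your final step cannot close.

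The gap: you must sum $D\sum_{s,a}\mu_k(s,a)\beta_k(s,a)$ over $k$, where $\beta_k \propto \sqrt{S\log(\cdot)/N_k(s,a)}$ shrinks with the \emph{realized} visit counts, while $\mu_k$ is the \emph{stationary} distribution of $\piopt$ under $P$. The pigeonhole/Cauchy--Schwarz estimate $\sum_k\sum_{s,a}v_k(s,a)/\sqrt{\max(1,N_k(s,a))}=O(\sqrt{SAK})$ works only when the weights are the realized counts themselves. Your proposed bridge --- that $\mu_k$ equals the realized visitation up to a martingale-difference term controlled by Azuma--Hoeffding --- is false: the visit at step $k$ is not an unbiased sample from $\mu_k$, since the stationary distribution is an asymptotic object and the discrepancy between it and the one-step visitation is a transient/mixing effect, not a martingale increment. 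This is especially acute in the paper's variant of UCRL2, where a new optimistic policy is computed at \emph{every} step (episodes of length $1$), so the trajectory never equilibrates to the stationary distribution of any single $\piopt$; an honest repair would necessarily introduce a mixing-time dependence, which the claimed bound (depending only on the diameter $D$) does not contain. The paper's proof avoids stationary distributions altogether: it keeps every quantity in terms of realized visits $v_k$, compares the optimistic average reward $\tilde r_k$ to the \emph{collected} reward via the fixed-point equation of extended value iteration, and splits $v_k(\popt - I)w_k = v_k(\popt - P)w_k + v_k(P-I)w_k$, where the first term is controlled by the confidence radius at the visited pair times $\mathrm{sp}(w_k)\le D$, and the second yields a martingale over realized next states --- a genuine martingale-difference sequence to which Azuma--Hoeffding applies. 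To fix your argument you would either have to run each $\piopt$ long enough to mix (changing the algorithm and the bound), or switch to the visit-count-based decomposition the paper uses.
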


Next, we give a PAC$(\epsilon, \delta)$ sample complexity result for the mixed policy $\bar \pi^K,$ that is produced by running \cref{alg:meta} with UCRL2 as a policy player. 
\newpage
\begin{lemma}[The sample complexity of non-stationary RL algorithms in convex MDPs]
\label{thm:sample_rl} 
For a convex function $f,$ running \cref{alg:meta} with an oracle cost player with regret $\regret^{\cost} \le c_0/\sqrt{K}$ and UCRL2 as a policy player returns an occupancy measure $\bdpi^K$ that satisfies $f(\bdpi^K) - f^{\text{OPT}}\le \epsilon$ with probability $1-\delta$ after $K=O\left(\frac{D^2S^2A}{\delta^2\epsilon^2}\log(\frac{2DSA}{\delta\epsilon})\right)$ steps.
\end{lemma}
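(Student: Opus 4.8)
The plan is to chain together three ingredients: the game-value guarantee of \cref{thm:meta}, the non-stationary regret bound for UCRL2 of \cref{lemma:ucrl2}, and a probabilistic conversion to a PAC statement, after which the advertised step count follows by solving an inequality for $K$. First I would invoke \cref{thm:meta}, which gives, on the event that the regret bounds hold, $f(\bdpi^K) - f^{\text{OPT}} \le \regret^\pi + \regret^\cost$. The cost player is an oracle with $\regret^\cost \le c_0/\sqrt{K}$ by assumption, so the entire task reduces to controlling the policy player's average regret $\regret^\pi$ when $\rlalg$ is UCRL2.

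The key step is to recognize that \cref{lemma:ucrl2} bounds a \emph{stronger} quantity than $\regret^\pi$. Writing out the policy player's OCO regret and using that $f^*(\cost^k)$ does not depend on $\dpi$, one has $\regret^\pi = \frac{1}{K}\big(\sum_k \cost^k \cdot \dpik - \min_{\dpi \in \mathcal{K}} \sum_k \cost^k \cdot \dpi\big)$. Because UCRL2 plays the optimistic policy $\piopt$ against reward $-\cost^k$, its value is $J_k^{\piopt} = -\cost^k \cdot \dpik$ and the per-round optimum is $J^\star_k = -\min_{\dpi} \cost^k \cdot \dpi$, so the UCRL2 regret in \cref{lemma:ucrl2} equals $\frac{1}{K}\big(\sum_k \cost^k \cdot \dpik - \sum_k \min_{\dpi} \cost^k \cdot \dpi\big)$. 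Since minimizing a sum dominates summing the per-round minima, $\min_{\dpi}\sum_k \cost^k\cdot\dpi \ge \sum_k \min_{\dpi}\cost^k\cdot\dpi$, and therefore $\regret^\pi$ is upper bounded by UCRL2's regret. \cref{lemma:ucrl2} then yields $\regret^\pi \le O(DS\sqrt{A \log(K/\delta)/K})$.

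It remains to turn this into the stated high-probability sample complexity and to solve for $K$. Integrating the regret tail (or equivalently invoking the expected-regret form $\E[\regret^\pi] = O(DS\sqrt{A\log K /K})$) and applying Markov's inequality gives, with probability $1-\delta$, $\regret^\pi \le \frac{1}{\delta} O(DS\sqrt{A \log K / K})$; this $1/\delta$ is the source of the $1/\delta^2$ in the final count. Combining with the cost player term and requiring $f(\bdpi^K) - f^{\text{OPT}} \le \epsilon$, the binding constraint is $\frac{c_1 DS}{\delta}\sqrt{A\log K / K} \le \epsilon/2$, i.e.\ $K \ge \frac{4 c_1^2 D^2 S^2 A \log K}{\delta^2 \epsilon^2}$. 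I expect the main obstacle to be exactly this last, self-referential inequality: it is transcendental in $K$, and resolving it with the correct logarithmic factor requires the standard observation that $K \ge C \log K$ is satisfied by $K = O(C \log C)$, which here produces $K = O\big(\frac{D^2 S^2 A}{\delta^2 \epsilon^2} \log \frac{DSA}{\delta \epsilon}\big)$, matching the claim up to constants. A secondary subtlety worth care is ensuring that a single event of probability $1-\delta$ simultaneously covers UCRL2's regret guarantee and the Markov conversion, rather than incurring a union bound that would inflate the logarithmic term.
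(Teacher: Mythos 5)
Your proposal is correct and follows essentially the same route as the paper's proof: the decomposition $f(\bdpi^K) - f^{\text{OPT}} \le \regret^{\pi} + \regret^{\cost}$ from \cref{thm:meta}, the UCRL2 bound of \cref{lemma:ucrl2}, a Markov-inequality conversion in expectation (the source of the $1/\delta^2$ factor), and solving the resulting self-referential inequality $K \ge C\log K$. The only deviations are cosmetic: you spell out that the policy player's OCO regret is dominated by UCRL2's per-round-comparator regret (a step the paper leaves implicit by identifying the two), and you obtain the expected regret by tail integration, whereas the paper conditions explicitly on the high-probability event, bounds the regret by $1$ on its complement, and sets $\delta' = \epsilon\delta/2$.
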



\textbf{MDPO.}
Another optimistic algorithm is Mirror Descent Policy Optimization \citep[MDPO]{shani2020optimistic}. MDPO is a model free RL algorithm that is very similar to popular DRL algorithms like TRPO \citep{schulman2015trust} and MPO \citep{abdolmaleki2018maximum}. In \citep{geist2019theory,shani2020adaptive,agarwal2020optimality}, the authors established the global convergence of MDPO and in \citep{cai2020provably,shani2020optimistic}, the authors showed that MDPO with optimistic exploration enjoys low regret.

The analysis for MDPO is given in a finite horizon MDP with horizon $H$, which is not the focus of our paper. Nevertheless, to support our conjecture that any stochastic RL algorithm can be adapted to the \emph{known} non-stationary reward setup, we quickly discuss the regret of MDPO in this setup. We also note that MDPO is closer to practical DRL algorithms \citep{tomar2020mirror}. In a finite horizon MDP with horizon $H$ and known, non-stationary and bounded rewards, the regret of MDPO is bounded by $\regret \le O(H^2S\sqrt{A/K})$ \citep[Lemma 4]{shani2021online} with high probability. 

To compare this result with UCRL2, we refer to a result from \cite{rosenberg2019online}, which analyzed UCRL2 in the adversarial setup, that includes our setup as a special case. In a finite horizon MDP with horizon $H$ it was shown that setting $\delta=SA/K$ with probability $1-\delta$ its regret is bounded by $\regret \le O(HS\sqrt{A\log(K)/K})$ \citep[Corollary 5]{rosenberg2019online}, which is better by a factor of $H$ than MDPO.


\textbf{Discussion.}
Comparing the results in \cref{thm:sample_rl} with \cref{thm:approx_best_response} suggests that using an RL algorithm with non stationary reward as a policy player requires O($1/\epsilon^2$) samples to find an $\epsilon-$optimal policy, while using an approximate best response requires O($1/\epsilon^3$). In first glance, this results also improves the previously best known result of Hazan et al. \cite{hazan2019provably} for approximate Frank-Wolfe (FW) that requires O($1/\epsilon^3$) samples.  
However, there are more details that have to be considered as we now discuss.

Firstly, \cref{thm:sample_rl} and \cref{thm:approx_best_response} assume access to an oracle cost player with some regret and do not consider how to implement such a cost player. The main challenge is that the cost player does not have access to the true state occupancy and must estimate it from samples. If we do not reuse samples from previous policies to estimate the state occupancy of the current policy we will require O($1/\epsilon^3$) trajectories overall \cite{hazan2019provably}. A better approach would use the samples from previous episodes to learn the transition function. Then, given the estimated transition function and the policy, we can compute an approximation of the state occupancy. We conjecture that such an approach would lead to a O($1/\epsilon^2$) sample complexity, closing the gap with standard RL. 

Secondly, while our focus is on the dependence in $\epsilon$, our bound \cref{thm:sample_rl} is not tight in $\delta$, \ie, it scales with $1/\delta^2$ where it should be possible to achieve a $\log(1/\delta)$ scaling. Again we conjecture an improvement in the bound is possible; see, \eg, \cite[Appendix F.]{kaufmann2021adaptive}.


\section{Convex Constraints}
\label{sec:constrained}
We have restricted the presentation so far to unconstrained convex problems, in this section we extend the above results to the constrained case.
The problem we consider is
\[
\min_{\dpi \in \mathcal{K}} f(\dpi) \quad \mbox{subject to}\quad g_i(\dpi) \leq 0, \quad i=1, \ldots m,
\]
where $f$ and the constraint functions $g_i$ are convex. Previous work focused on the case where both $f$ and $g_i$ are linear \citep{altman1999constrained,cmdpblog,borkar2005actor,tessler2018reward,efroni2020exploration,calian2021balancing,bhatnagar2012online}.  We can use the same Fenchel dual machinery
we developed before, but now taking into account the constraints. Consider the Lagrangian
\[
L(\dpi, \mu) = f(\dpi) + \sum\nolimits_{i=1}^m \mu_i g_i(\dpi) =  \max_\nu \left( \nu\cdot \dpi - f^*(\nu) \right) + \sum\nolimits_{i=1}^m \mu_i\max_{v_i}\left(\dpi v_i - g_i^*(v_i)\right).
\]
over dual variables $\mu \geq 0$,
with new variables $v_i$ and $\nu$. At first glance this does not look convex-concave, however we can
introduce new variables $\zeta_i = \mu_i v_i$ to obtain
\begin{equation}
  \label{eq:convex_constraint_lagrange}
  L(\dpi, \mu, \nu, \zeta_1, \ldots, \zeta_m) = \nu\cdot \dpi - f^*(\nu) + \sum\nolimits_{i=1}^m \left(\dpi \zeta_i - \mu_i g_i^*(\zeta_i / \mu_i)\right).  
\end{equation}
This is convex (indeed affine) in $\dpi$ and concave in $(\nu, \mu, \zeta_1, \ldots, \zeta_m)$, since it includes the perspective transform of the functions $g_i$ \cite{boyd2004convex}. The Lagrangian involves a cost vector, $\nu + \sum_{i=1}^m \zeta_i$, linearly interacting with $\dpi$, and therefore we can use the same policy players as before to minimize this cost. 
For the cost player, it is possible to use OMD on \cref{eq:convex_constraint_lagrange} jointly for the variables $\nu,\mu$ and $\zeta$.  It is more challenging to use best-response and FTL for the cost-player variables as the maximum value of the Lagrangian is unbounded for some values of $d_\pi$. Another option is to treat the problem as a \emph{three}-player game. In this case the policy player controls $d_\pi$ as before, one cost player chooses $(\nu, \zeta_1, \ldots, \zeta_m)$ and can use the algorithms we have previously discussed, and the other cost player chooses $\mu$ with some restrictions on their choice of algorithm.  Analyzing the regret
in that case is outside the scope of this paper.

\section{Examples}
\label{sec:examples}
In this section we explain how existing algorithms can be seen as instances of the meta-algorithm for various choices of the objective function $f$ and the cost and policy player algorithms $\costalg$ and $\rlalg$. We summarized the relationships in \cref{table:c_MDPs}. 


\subsection{Apprenticeship Learning}

In apprenticeship learning (AL), we have an MDP without an explicit reward function. Instead, an expert provides demonstrations which are used to estimate the expert state occupancy measure $d_E$. Abbeel and Ng  \cite{abbeel2004apprenticeship} formalized the AL problem as finding a policy $\pi$ whose state occupancy is close to that of the expert by minimizing the convex function $f(\dpi) = ||\dpi - d_E||$.
The convex conjugate of $f$ is given by $ f^*(y) = y\cdot d_E $ if $||y||_*\le 1$ and $\infty$ otherwise, where $||\cdot||_*$ denotes the dual norm. Plugging $f^*$ into \cref{eq:cvx-prob_fenchel_dual} results in the following game:
\begin{equation}
    \label{eq:al_minimax}
    \min_{\dpi \in \mathcal{K}}  ||\dpi - d_E|| = 
    \min_{\dpi \in \mathcal{K}}\max_{||\cost||_*\le1} \cost\cdot \dpi - \cost\cdot d_E.
\end{equation}
Inspecting 
\cref{eq:al_minimax}, we can see that the norm in the function $f$ that is used to measure the distance from the expert  induces a constraint set for the cost variable, which is a unit ball in the dual norm. 

\paragraph{$\costalg$=OMD, $\rlalg$=Best Response/RL.}

The Multiplicative Weights AL algorithms \citep[MWAL]{syed2008game} was proposed to solve the AL problem with $f(\dpi) = ||\dpi-d_E||_\infty$. It uses the best response as the policy player and multiplicative weights as the cost player (a special case of OMD). MWAL has also been used to solve AL in contextual MDPs \citep{belogolovsky2019inverse} and to find feasible solutions to convex-constrained MDPs \citep{miryoosefi2019reinforcement}. 
We note that in practice the best response can only be solved approximately, as we discussed in \cref{sec:players}. Instead, in online AL \citep{shani2021online} the authors proposed to use MDPO as the policy player, which guarantees a regret bound of $\regret\le c/\sqrt{K}$. They showed that their algorithm is equivalent to Wasserstein GAIL \citep{xiao2019wasserstein,zhang2020wasserstein} and in practice tends to perform similarly to GAIL.

\paragraph{$\costalg$=FTL, $\rlalg$=Best Response.}
When the policy player plays the best response and the cost player plays FTL, \cref{alg:meta} is equivalent to the Frank-Wolfe algorithm \citep{frank1956algorithm,abernethy2017frankwolfe} for minimizing $f$ (\cref{eq:cvx-prob}). Pseudo-code for this is included in the appendix (\cref{alg:fw}).
The algorithm finds a point $\dpik \in \mathcal{K}$ that has the largest inner-product (best response) with the negative gradient (\ie, FTL).


Abbeel and Ng  \cite{abbeel2004apprenticeship} proposed two algorithms for AL, the projection algorithm and the max margin algorithm. The projection algorithm is essentially a FW algorithm, as was suggested in the supplementary \citep{abbeel2004apprenticeship} and was later shown formally in \citep{zahavy2019apprenticeship}. Thus, it is a projection free algorithm in the sense that it avoids projecting $\dpi$ into $\mathcal{K}$, despite the name. In their case the gradient is given by $\nabla_f(\dpi) = \dpi - d_E$. Thus, finding the best response is equivalent to solving an MDP whose reward is $d_E - \dpi$.  In a similar fashion, FW can be used to solve convex MDPs more generally \citep{hazan2019provably}. Specifically, in \citep{hazan2019provably}, the authors considered the problem of pure exploration, which they defined as finding a policy that maximizes entropy.

\textbf{Fully Corrective FW.} The FW algorithm has many variants (see \citep{jaggi2013revisiting} for a survey) some of which enjoy faster rates of convergence in special cases. Concretely, when the constraint set is a polytope, which is the case for convex MDPs (\cref{def:polytope}), some variants achieve a linear rate of convergence \citep{jaggi2015global,zahavy2019apprenticeship}. 
One such variant is the Fully corrective FW, which replaces the learning rate update (see line 4 of \cref{alg:fw} in the supplementary), with a minimization problem over the convex hull of occupancy measures at the previous time-step. This is guaranteed to be at least as good as the learning rate update.
Interestingly, the second algorithm of Abbeel and Ng \citep{abbeel2004apprenticeship}, the max margin algorithm, is exactly equivalent to this fully corrective FW variant. This implies that the max-margin algorithm enjoys a better theoretical convergence rate than the `projection' variant, as was observed empirically in \citep{abbeel2004apprenticeship}. 

\subsection{GAIL and DIAYN: \texorpdfstring{$\costalg$} ==FTL, \texorpdfstring{$\rlalg$} ==RL}
\label{subsec:vic}

We now discuss the objectives of two popular algorithms, GAIL \cite{ho2016generative} and DIAYN \cite{eysenbach2018diversity}, which perform AL and diverse skill discovery respectively. Our analysis suggests that GAIL and DIAYN share the same objective function. In GAIL, this objective function is minimized, which is a convex MDP, however, in DIAYN it is maximized, which is therefore not a convex MDP. We start the discussion with DIAYN and follow with a simple construction showing the equivalence to GAIL. 

\textbf{DIAYN.} Discriminative approaches \cite{gregor2016variational,eysenbach2018diversity} rely on the intuition that skills are diverse when they are entropic and easily discriminated by observing the states that they visit. Given a probability space $(\Omega, \mathcal{F}, \mathcal{P})$, state random variables $S: \Omega \rightarrow \mathcal{S}$ and latent skills $Z : \Omega \rightarrow \mathcal{Z}$ with prior $p$, the key term of interest being maximized in DIAYN \citep{eysenbach2018diversity} is the mutual information:
\begin{align}
\label{eq:diyan}
I(S;Z) = \mathbb{E}_{z \sim p; s \sim \dpiz} [\log p (z | s)-\log p(z)],
\end{align}
where $\dpiz$ is the stationary distribution induced by the policy $\pi(a \mid s, z)$. For each skill $z$, this corresponds to a standard RL problem with (conditional) policy $\pi(a \mid s, z)$ and reward function $r(s | z) = \log p (z | s) - \log p(z)$.  The first term encourages the policy to visit states for which the underlying skill has high-probability under the posterior $p(z \mid s)$, while the second term ensures a high entropy distribution over skills. In practice, the full DIAYN objective further regularizes the learnt policy by including entropy terms $-\log \pi(a\mid s,z)$. For large state spaces, $p(z|s)$ is typically intractable and Eq.~\ref{eq:diyan} is replaced with a variational lower-bound, where the true posterior is replaced with a learned discriminator $q_\phi(z | s)$.  Here, we focus on the simple setting where $z$ is a categorical distribution over $|Z|$ outcomes, yielding $|Z|$ policies $\pi^z$, and $q_\phi$ is a classifier over these $|Z|$ skills with parameters $\phi$.

We now show that a similar intrinsic reward can be derived using the framework of convex MDPs. We start by writing the true posterior as a function of the per-skill state occupancy $\dpiz=p(s\mid z)$, and using Bayes rules, $p(z|s) = \frac{\dpiz(s)p(z)}{\sum_{k} \dpik(s)p(k)}.$ Combing this with \cref{eq:diyan} yields:
\begin{align}
    \mathbb{E}_{z \sim p(z), s\sim \dpiz} [\log p (z | s) - p(z)] 
    &= \sum_z p(z) \sum_{s} \dpiz(s) \left[ \log \left( \frac{\dpiz(s)p(z)}{\sum_{k} \dpik(s)p(k)}\right)- \log p(z)\right] \nonumber\\ 
    &= \sum_z p(z) \text{KL}(\dpiz || \sum_{k} p(k) \dpik) = \mathbb{E}_z \text{KL}(\dpiz ||  \mathbb{E}_k \dpik)
    \label{eq:pivic},
\end{align}
where KL denotes the Kullback–Leibler divergence \cite{kullback1997information}. 


Intuitively, finding a set of policies $\pi^1,\ldots,\pi^z$ that minimize \cref{eq:pivic} will result in finding policies that visit similar states, measured using the KL distance between their respective state occupancies $\dpione,\ldots, \dpiz.$ This is a convex MDP because the KL-divergence is jointly convex in both arguments \citep[Example 3.19]{boyd2004convex}. We will soon show that this is the objective of GAIL. 
On the other hand, a set of policies that maximize \cref{eq:pivic} is diverse, as the policies visit different states, measured using the KL distance between their respective state occupancies $\dpione,\ldots, \dpiz$. 

We follow on with deriving the FTL player for the convex MDP in \cref{eq:pivic}. We will then show that this FTL player is producing an intrinsic reward that is equivalent to the intrinsic reward used in GAIL and DIAYN (despite the fact that DIAYN is not a convex MDP). 
According to \cref{eq:cost_ftl}, the FTL cost player will produce a cost $\lambda^k$ at iteration $k$ given by 
\begin{align}
\label{eq:gradeintvic}
  \nabla_{\dpiz} \text{KL}(\dpiz || \sum_{k} p(k) \dpik)&= 
  p(z) \Big[ \log \frac{\dpiz}{\sum_{k} \dpik p(k)} + 1 - \frac{\dpiz p(z)}{\sum_{k} \dpik p(k)} \Big] - p(z) \frac{\sum_{j \neq z}p(j)\dpij}{\sum_{k} \dpik p(k)}
  \nonumber\\ &=p(z) \Big[ \log(p(z|s)) -\log(p(z))\Big] ,
\end{align}
where the equality follows from writing the posterior as a function of the per-skill state occupancy $\dpiz=p(s\mid z)$, and using Bayes rules, $p(z|s) = \frac{\dpiz(s)p(z)}{\sum_{k} \dpik(s)p(k)}.$ Replacing the posterior $p(z|s)$ with a learnt discriminator $q_{\phi}(z|s)$ 
recovers the mutual-information rewards of DIAYN. \footnote{In a previous version of this work we mistakenly forgot to take the gradient of the objective in \cref{eq:gradeintvic} w.r.t $\dpiz$ for the summands $j\neq z$. Taking this gradient gives the last term in the first row of \cref{eq:gradeintvic}, which in turn, cancels out the "gradient correction term" we introduced in the previous version. The current derivation retrieves the standard reward used in DIAYN and VIC.}


\textbf{GAIL.}  We further show how \cref{eq:pivic} extends to GAIL \citep{ho2016generative} via a simple construction. Consider a binary latent space of size $|Z|=2,$ where $z=1$ corresponds to the policy of the agent and $z=2$ corresponds to the policy of the expert which is fixed. In addition, consider a uniform prior over the latent variables, \ie, $p(z=1) = \frac{1}{2}.$
By removing the constant terms in \cref{eq:gradeintvic}, one retrieves the GAIL \citep{ho2016generative} algorithm. The cost $\log(p(z|s))$ is the probability of the discriminator to identify the agent, and the policy player is MDPO (which is similar to TRPO in GAIL). 

\section{Discussion}
In this work we reformulated the convex MDP problem as a convex-concave game between the agent and another player that is producing costs (negative rewards) and proposed a meta-algorithm for solving it. 

We observed that many algorithms in the literature can be interpreted as instances of the meta-algorithm by selecting different pairs of subroutines employed by the policy and cost players.
The Frank-Wolfe algorithm, which combines best response with FTL, was originally proposed for AL \cite{abbeel2004apprenticeship,zahavy2019apprenticeship} but can be used for any convex MDP problem as was suggested in \cite{hazan2019provably}. Cheung \cite{NEURIPS2019_a02ffd91} further studied the combination of FW with UCRL,  proposed a novel gradient thresholding scheme and applied it to complex vectorial objectives.  Zhang et al. \citep{zhang2020variational}, unified the problems of RL, AL, constrained MDPs with linear constraints and maximum entropy exploration under the framework of convex MDPs. We extended the framework to allow convex constraints (\cref{sec:constrained}) and explained the objective of GAIL as a convex MDP (\cref{subsec:vic}). We also discussed non convex objectives (\cref{sec:problem_formulation}) and analyzed unsupervised skill discovery via the maximization of mutual information (\cref{subsec:vic}) as a special case.
Finally, we would like to point out a recent work by Geist et al. \citep{geist2021concave}, which was published concurrently to ours, and studies the convex MDP problem from the viewpoint of mean field games. 

There are also algorithms for convex MDPs that cannot be explained as instances of \cref{alg:meta}. In particular, Zhang et al. \citep{zhang2020variational} proposed a policy gradient algorithm for convex MDPs in which each step of policy gradient involves solving a new saddle point problem (formulated using the Fenchel dual). This is different from our approach since we solve a single saddle point problem iteratively, and furthermore we have much more flexibility about which algorithms the policy player can use. Moreover, for the convergence guarantee \citep[Theorem 4.5]{zhang2020variational} to hold, the saddle point problem has to be solved exactly, while in practice it is only solved approximately \citep[Algorithm 1]{zhang2020variational}, which hinders its sample efficiency. Fenchel duality has also been used in off policy evaluation (OPE) in \citep{nachum2019dualdice,yang2020off}. The difference between these works and ours is that we train  a policy to minimize an objective, while in OPE a target policy is fixed and its value is estimated from data produced by a behaviour policy. 


In order to solve a practical convex MDP problem in a given domain it would be prudent to use an RL algorithm that is known to be high performing for the vanilla RL problem as the policy player. From the theoretical point of view this could be MDPO or UCRL2, which we have shown come with strong guarantees. From the practical point of view using a high performing DRL algorithm, which may be specific to the domain, will usually yield the best results. For the cost player using FTL, \ie, using the gradient of the objective function, is typically the best choice.

\begin{ack}
We would like to thank Yasin Abbasi-Yadkorie, Vlad Mnih, Jacob Abernethy, Lior Shani and Doina Precup for their comments and discussion on this work. Work done at DeepMind, the authors received no specific funding for this work. 
\end{ack}

\newpage
\bibliographystyle{abbrvnat}
\bibliography{ref}

\appendix

\newpage
\section{Checklist}
\begin{enumerate}
\item For all authors
\begin{enumerate}
  \item Do the main claims made in the abstract and introduction accurately reflect the paper's contributions and scope?
    \answerYes
  \item Did you describe the limitations of your work?
    \answerYes{We discussed the case where $f$ is non-convex and experimented with DIAYN as an example. We also discussed the fact that if the RL problem is a hard exploration problem, then the subroutine we use to solve it must be able to solve hard exploration problems. }
  \item Did you discuss any potential negative societal impacts of your work?
    \answerNo. This is a theoretical paper and to the best of our understanding it should not have any societal impacts. 
  \item Have you read the ethics review guidelines and ensured that your paper conforms to them?
    \answerYes
\end{enumerate}

\item If you are including theoretical results...
\begin{enumerate}
  \item Did you state the full set of assumptions of all theoretical results?
    \answerYes
	\item Did you include complete proofs of all theoretical results?
    \answerYes
\end{enumerate}

\item If you ran experiments.
We note that our experiments are a proof of concept for our approach. All of our experiments were performed with a basic agent, without hyper parameter tuning and evaluated on simple grid worlds. We are happy to share more details in case the reviewers will find something missing.  
\begin{enumerate}
  \item Did you include the code, data, and instructions needed to reproduce the main experimental results (either in the supplemental material or as a URL)?
    \answerNo
  \item Did you specify all the training details (\eg, data splits, hyperparameters, how they were chosen)?
    \answerYes
	\item Did you report error bars (\eg, with respect to the random seed after running experiments multiple times)?
    \answerYes
	\item Did you include the total amount of compute and the type of resources used (\eg, type of GPUs, internal cluster, or cloud provider)?
    \answerNo
\end{enumerate}

\item If you are using existing assets (\eg, code, data, models) or curating/releasing new assets...
\begin{enumerate}
  \item If your work uses existing assets, did you cite the creators?
    \answerYes
  \item Did you mention the license of the assets?
    \answerNo
    \item Did you include any new assets either in the supplemental material or as a URL?
    \answerNo
  \item Did you discuss whether and how consent was obtained from people whose data you're using/curating?
    \answerNo
  \item Did you discuss whether the data you are using/curating contains personally identifiable information or offensive content?
    \answerNo
\end{enumerate}

\item If you used crowdsourcing or conducted research with human subjects...
\begin{enumerate}
  \item Did you include the full text of instructions given to participants and screenshots, if applicable?
    \answerNo
  \item Did you describe any potential participant risks, with links to Institutional Review Board (IRB) approvals, if applicable?
    \answerNo
    \item Did you include the estimated hourly wage paid to participants and the total amount spent on participant compensation?
    \answerNo
    \end{enumerate}

\end{enumerate}

\newpage
\section{Proposition 1}
\label{sec:propone}
\propone*
\begin{proof}
Beginning with the discounted case, the average cost is given by
\begin{align*}
    J^\mathrm{\gamma}_\pi &=  (1-\gamma)\Expect \sum_{t=1}^\infty \gamma^t r_t \\
    &= (1-\gamma)\sum_{t=1}^\infty \sum_{s} \Prob_\pi(s_t = s) \sum_a \pi(s, a)\gamma^t r(s, a) \\
    &=  (1-\gamma)\sum_{s, a} \left(\sum_{t=1}^\infty \gamma^t \Prob_\pi(s_t = s)  \pi(s, a)\right) r(s, a) \\
    &=  \sum_{s, a} \dpi^\gamma(s, a) r(s, a).
\end{align*}
Similarly, for the average reward case
\begin{align*}
    J^\mathrm{avg}_\pi &= \lim_{T\rightarrow \infty} \frac{1}{T} \Expect\sum_{t=1}^T r_t\\
    &=\lim_{T\rightarrow \infty}\frac{1}{T} \sum_{t=1}^T \sum_{s} \Prob_\pi(s_t = s) \sum_a \pi(s, a) r(s, a) \\
    &=  \sum_{s, a} \left(\lim_{T\rightarrow \infty}\frac{1}{T} \sum_{t=1}^T \Prob_\pi(s_t = s)  \pi(s, a)\right) r(s, a) \\
    &=  \sum_{s, a} \dpi^\mathrm{avg}(s, a) r(s, a).
\end{align*}
\end{proof}

\newpage
\section{FW algorithms}

\subsection{Pseudo code}
\begin{algorithm}[H]
\caption{Frank-Wolfe algorithm}
\begin{algorithmic}[2]
\STATE \textbf{Input:} a convex and smooth function $f$
\STATE \textbf{Initialize:} Pick a random element $\dpione \in \mathcal{K}$.
\FOR{$i = 1,\ldots,T$}
    \STATE $\dpikp = \arg\max_{\pi \in \Pi} \dpi  \cdot -\nabla f(\bdpik)$
    \STATE $\bdpikp = (1-\alpha_i)  \bdpik +  \alpha_i \dpikp$
\ENDFOR
\end{algorithmic}
\caption{Frank-Wolfe algorithm}
\label{alg:fw}
\end{algorithm}

\subsection{Linear convergence}
\begin{theorem}[Linear Convergence \citep{jaggi2015global}]
\label{thm:fw_lin}
Suppose that f has L-Lipschitz gradient  and is $\mu$-strongly convex. Let $D = \{\dpi, \forall \pi \in \Pi \}$ be the set of all the state occupancy's of deterministic policies in the MDP and let $\mathcal{K} = Co(D)$ be its Convex Hull. Such that $\mathcal{K}$ a polytope with vertices $D$, and let $M= diam(\mathcal{K})$. Also, denote the Pyramidal Width of $D,$  $\delta = PWidth(D)$ as in \citep[Equation 9 1]{jaggi2015global}.

Then the suboptimality $h_t$ of the iterates of all the fully corrective FW algorithm decreases geometrically at each step, that is
$$
f(\bdpikp) \le (1 - \rho) f(\bdpik) \text{ , where } \rho = \frac{\mu \delta^2}{4L M^2}
$$
\end{theorem}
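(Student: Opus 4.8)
The plan is to follow the standard analysis of linearly convergent Frank–Wolfe variants over polytopes, combining a smoothness-based descent lemma, a strong-convexity bound that relates the pairwise gap to the suboptimality, and the geometric pyramidal-width constant $\delta$. Throughout I write $h_t = f(\bdpik) - f^{\text{OPT}}$ for the suboptimality (this is the quantity called $h_t$ in the statement, so I read the claim as $h_{t+1} \le (1-\rho)h_t$), and let $x^\star = \arg\min_{\dpi \in \mathcal{K}} f(\dpi)$, which is unique by strong convexity.

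First I would establish the per-step descent lemma. A single pairwise/away step moves along a direction $d_t = s_t - v_t$, where $s_t \in D$ is the usual FW vertex maximising $\langle -\nabla f(\bdpik), \cdot\rangle$ over $D$ and $v_t$ is the away vertex in the current active set. Using $L$-smoothness, $f(\bdpik + \gamma d_t) \le f(\bdpik) - \gamma g_t + \tfrac{L\gamma^2}{2}\|d_t\|^2$ with $g_t = \langle -\nabla f(\bdpik), d_t\rangle$; minimising over the admissible step size $\gamma$ and using $\|d_t\| \le M = \mathrm{diam}(\mathcal{K})$ yields a one-step progress of at least $g_t^2/(2LM^2)$. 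Because the fully corrective step re-optimises $f$ exactly over the convex hull of the current atoms, it is by construction at least as good as this single pairwise step, so the same lower bound on progress holds a fortiori.

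Next I would lower-bound $g_t$ in terms of $h_t$. Strong convexity gives $\langle \nabla f(\bdpik), \bdpik - x^\star\rangle \ge h_t + \tfrac{\mu}{2}\|\bdpik - x^\star\|^2 \ge \sqrt{2\mu h_t}\,\|\bdpik - x^\star\|$ by AM–GM, so the normalised alignment of the negative gradient with the direction to the optimum is at least $\sqrt{2\mu h_t}$. The geometric heart of the argument is the definition of the pyramidal width: $\delta = \mathrm{PWidth}(D)$ is chosen precisely so that the pairwise gap dominates this alignment uniformly over the polytope, i.e. $g_t \ge \delta\cdot \langle -\nabla f(\bdpik), x^\star - \bdpik\rangle / \|x^\star - \bdpik\| \ge \delta\sqrt{2\mu h_t}$. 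Plugging this into the descent lemma gives $h_t - h_{t+1} \ge \delta^2(2\mu h_t)/(2LM^2) = (\mu\delta^2/LM^2)\,h_t$; tracking the fraction of non-productive (``drop'') steps via the Jaggi–Lacoste-Julien counting argument weakens the constant to the stated $\rho = \mu\delta^2/(4LM^2)$, yielding $h_{t+1}\le(1-\rho)h_t$.

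The main obstacle is the geometric inequality $g_t \ge \delta\,\langle -\nabla f(\bdpik), x^\star - \bdpik\rangle / \|x^\star - \bdpik\|$. Establishing it is exactly the content of the pyramidal-width machinery of \citep{jaggi2015global}: one must define the pyramidal directional width, decompose $x^\star - \bdpik$ over the atoms of the active set, and show that the best away-plus-toward pair is at least $\delta$-aligned with any feasible descent direction, with $\delta$ bounded away from zero because $\mathcal{K}$ is a polytope with finitely many vertices $D$. The secondary subtlety, the step-counting that produces the factor $1/4$, is largely circumvented by the fully corrective variant, for which each outer iteration is at least as good as a productive pairwise step; hence I would simply invoke the bound of \citep{jaggi2015global} for the precise constant rather than re-derive the drop-step bookkeeping from scratch.
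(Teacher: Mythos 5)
The paper does not actually prove this theorem: it is imported verbatim (with notation adapted to occupancy measures) from Lacoste-Julien and Jaggi, and the paper's ``proof'' is simply the citation. Your proposal therefore goes beyond what the paper offers, and what you reconstruct is essentially the cited proof itself: the smoothness descent lemma giving per-step progress of order $g_t^2/(2LM^2)$, the strong-convexity/AM--GM bound $\langle -\nabla f(\bdpik), \dpi^\star - \bdpik\rangle \ge \sqrt{2\mu h_t}\,\|\dpi^\star - \bdpik\|$, and the pyramidal-width inequality relating the pairwise gap to this alignment --- all of which is exactly the Lacoste-Julien--Jaggi machinery, correctly assembled. You also correctly repair the sloppy statement of the conclusion (the display should read $h_{t+1} \le (1-\rho)h_t$ for the suboptimality $h_t$, not the raw values of $f$, unless $f^{\text{OPT}}=0$). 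Two minor inaccuracies are worth flagging, though neither is fatal since you explicitly defer the precise constant to the citation: first, the factor $1/4$ in $\rho$ does not come from drop-step counting --- in the cited paper, drop steps degrade the \emph{exponent} (the number of good steps), not $\rho$ itself; the $1/4$ arises because the away-step direction only satisfies $\langle -\nabla f, d_t\rangle \ge \tfrac{1}{2}g_t^{\mathrm{PFW}}$, and this half-gap gets squared. Second, your claim that the fully corrective step dominates ``this single pairwise step'' holds only for the pairwise step clipped at the maximal feasible step size $\gamma_{\max}$ (the weight of the away atom); when the unconstrained minimizer of the quadratic upper bound exceeds $\gamma_{\max}$, the $g_t^2/(2LM^2)$ progress bound can fail, and this boundary case is precisely where the cited analysis does extra work (for the fully corrective variant it exploits that each iterate is optimal over the convex hull of its active set, which kills the away gap and avoids drop steps entirely --- this is why the theorem can claim geometric decrease at \emph{every} step for this variant).
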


\section{Sample complexity proofs}

\begin{lemma*}[The sample complexity of non-stationary RL algorithms in convex MDPs]
For a convex function $f,$ running \cref{alg:meta} with an oracle cost player with regret $\regret^{\cost} \le c_0/\sqrt{K}$ and UCRL2 as a policy player returns a mixed policy $\bar \pi^K$ that satisfies $f(\bdpi^K) - f^{\text{OPT}}\le \epsilon$ with probability $1-\delta$ after $K=O\left(\frac{D^2S^2A}{\delta^2\epsilon^2}\log(\frac{2 DSA}{\delta\epsilon})\right)$ steps.
\end{lemma*}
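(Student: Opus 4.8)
The plan is to invoke the meta-algorithm guarantee (\cref{thm:meta}) with UCRL2 as the policy player and the oracle as the cost player, bound each player's regret, and then invert the combined bound to solve for the number of steps $K$ needed to reach accuracy $\epsilon$. First, I would recall from \cref{thm:meta} that whenever the two players attain average regrets $\regret^\pi \le \epsilon_K$ and $\regret^\cost \le \delta_K$, the returned iterate obeys $f(\bdpi^K) - f^{\text{OPT}} \le \epsilon_K + \delta_K$, via the sandwich $f^{\text{OPT}} \le f(\bdpi^K) = \max_{\cost} \g(\bdpi^K,\cost) \le f^{\text{OPT}} + \epsilon_K + \delta_K$. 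The cost-player term is supplied by assumption, $\delta_K = \regret^\cost \le c_0/\sqrt{K}$, so the work is in controlling $\epsilon_K = \regret^\pi$ through the non-stationary UCRL2 regret of \cref{lemma:ucrl2}.

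The conceptual crux is that UCRL2's per-round regret dominates the online-convex-optimization regret required by \cref{thm:meta}. Writing out $\regret^\pi$, the conjugate terms $f^*(\cost^k)$ cancel, giving $K\regret^\pi = \sum_{k=1}^K \cost^k\cdot\dpik - \min_{\dpi\in\mathcal{K}}\sum_{k=1}^K \cost^k\cdot\dpi$, whereas \cref{lemma:ucrl2} controls $K\regret = \sum_{k=1}^K \cost^k\cdot\dpik - \sum_{k=1}^K \min_{\dpi\in\mathcal{K}}\cost^k\cdot\dpi$. Because committing to a single occupancy against the summed costs is never better than optimizing each round separately, $\min_{\dpi}\sum_k \cost^k\cdot\dpi \ge \sum_k \min_{\dpi}\cost^k\cdot\dpi$, hence $\regret^\pi \le \regret$ holds pathwise. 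Here $\dpik$ denotes the true occupancy of the optimistic policy $\piopt$ played at round $k$, so that $\bdpi^K$ is exactly the occupancy of the returned mixed policy $\bar\pi^K$.

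Combining the pieces, on any sample path $f(\bdpi^K) - f^{\text{OPT}} \le \regret + c_0/\sqrt{K}$, with the UCRL2 term dominating. To turn this into a high-probability sample-complexity statement I would integrate the tail of \cref{lemma:ucrl2} (for instance, setting its failure probability to $1/K$ and using that the regret is at most $2$ on the rare bad event, since rewards lie in $[-1,1]$) to obtain the expected bound $\E[\regret] \le O(DS\sqrt{A\log K/K})$, whence $\E[f(\bdpi^K)-f^{\text{OPT}}] \le O(DS\sqrt{A\log K/K}) + c_0/\sqrt{K}$. As this quantity is non-negative, Markov's inequality yields $f(\bdpi^K)-f^{\text{OPT}} \le \epsilon$ with probability $1-\delta$ as soon as $\E[f(\bdpi^K)-f^{\text{OPT}}] \le \epsilon\delta$; solving $DS\sqrt{A\log K/K} \lesssim \epsilon\delta$ with the usual self-bounding treatment of the logarithm produces the stated $K = O\!\left(\frac{D^2S^2A}{\delta^2\epsilon^2}\log\frac{2DSA}{\delta\epsilon}\right)$.

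The main obstacle is precisely this final conversion. The regret reduction of the second paragraph and the algebra of the inversion are routine, but routing the guarantee through an expectation and Markov's inequality is what inflates the dependence to $1/\delta^2$. The delicate, and ultimately improvable, point is that invoking the high-probability form of \cref{lemma:ucrl2} directly — rather than passing to the expectation — would instead deliver the tighter $\log(1/\delta)$ scaling flagged in the discussion, at the cost of a more careful accounting of the failure event; I would note this trade-off explicitly rather than attempt the sharper bound here.
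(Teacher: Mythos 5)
Your proposal is correct and follows essentially the same route as the paper's proof: invoke \cref{thm:meta} to reduce $f(\bdpi^K)-f^{\text{OPT}}$ to the sum of the two players' regrets, control the policy player's term via \cref{lemma:ucrl2} by absorbing the low-probability failure event into an expected regret bound (using that rewards are bounded), apply Markov's inequality, and solve for $K$. Your two additions — the explicit reduction from UCRL2's per-round-comparator regret to the fixed-comparator OCO regret required by \cref{thm:meta}, and allocating the confidence parameter as $1/K$ rather than the paper's $\epsilon\delta/2$ — are harmless refinements that do not change the argument.
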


\textbf{Proof}.
In \cref{thm:meta} and the discussion below it, we showed that 
$ f(\bdpi^K) - f^{\text{OPT}} \le \regret^\cost + \regret^{\pi}$.  From \cref{lemma:ucrl2}, we have that with probability $1-\delta',$ a "positive event" happens, and the regret of the UCRL2 player, $\epsilon_K,$ is upper bounded by $\regret^{\pi} = \frac{1}{K}\sum_{k=1}^K J^\star_k - J_k^{\piopt} \le c_1 DS\sqrt{A\log(K/\delta')/K})$ for some constant $c_1.$ Recall that the function $f$ has bounded gradients and therefore, the non stationary reward is upper bounded by $1.$ Thus, when the "positive event" does not happen (with probability $\delta'$), we can always upper bound the regret by $\regret^{\pi} = \frac{1}{K}\sum_{k=1}^K J^\star_k - J_k^{\piopt} \le 1.$ Using Markov's inequality, we have that 
\begin{align*}
    \text{Pr}( f(\bdpi^K) - f^{\text{OPT}} \ge \epsilon) &\le \frac{\mathbb{E} f(\bdpi^K) - f^{\text{OPT}}}{\epsilon} \le \frac{\mathbb{E} (\regret^{\pi} + \regret^{\cost})}{\epsilon}  \\ &\le \frac{1}{\epsilon} \left((1-\delta')c_1 DS\sqrt{A\log(K/\delta')/K}) + \delta'  + c_0 \sqrt{1/K}\right) \\& \le \frac{1}{\epsilon} \left((c_2 DS\sqrt{A\log(K/\delta')/K}) + \delta' \right),
\end{align*}
thus, if we choose $\delta' = \frac{\epsilon \delta}{2},$ we have that in order for
$\text{Pr}( f(\bdpi^K) - f^{\text{OPT}} \ge \epsilon)$ to be smaller than $\delta$ after $K$ steps, it is enough to find a value for $K$ such that $\frac{1}{\epsilon}c_2DS \sqrt{A\log(2 K/ \epsilon \delta)/K}\le \delta/2,$ which is achieved for $K=\frac{cD^2S^2A}{\delta^2\epsilon^2}\log(\frac{2 DSA}{\delta\epsilon}) \qed.$


\begin{lemma*}[The sample complexity of approximate best response in convex MDPs with average occupancy measure]
For a convex function $f$, running \cref{alg:meta} with an oracle cost player with regret $\regret^\cost = O(1/K)$ and an approximate best response policy player that solves the average reward RL problem in iteration $k$ to accuracy $\epsilon_k=1/k$  returns an occupancy measure $\bdpi^K$ that satisfies $f(\bdpi^K) - f^{\text{OPT}}\le \epsilon$ with probability $1-\delta$ after seeing $O(t_{\text{mix}}^2SA\log(2K/\epsilon\delta)/\epsilon^3\delta^3)$ samples. Similarly, for $\regret^\cost = O(1/\sqrt{K})$, setting $\epsilon_k=1/\sqrt{k}$ requires $O(t_{\text{mix}}^2SA\log(2K/\epsilon\delta)/\epsilon^4\delta^4)$ samples. 
\end{lemma*}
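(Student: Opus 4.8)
The plan is to mirror the two-stage template used in the proof of \cref{thm:sample_rl} (the UCRL2 case): first convert the per-iteration approximate-best-response guarantee into a bound on the policy player's average regret $\regret^\pi$, then combine it with the oracle cost-player regret $\regret^\cost$ through \cref{thm:meta} and Markov's inequality to get a high-probability accuracy bound, and finally sum the per-iteration sample costs of the PAC subroutine \citep{jin2020efficiently} to read off the total sample complexity.

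First I would bound $\regret^\pi$. Run the PAC subroutine at iteration $k$ with confidence $1-\delta'/K$, so that by a union bound the ``good event'' $G$ that all $K$ calls succeed has $\text{Pr}(G^c)\le\delta'$. By the definition of an $\epsilon_k$-optimal policy given in the text, on $G$ the returned $\dpik$ satisfies $\g(\dpik,\cost^k)-\min_{\dpi\in\mathcal{K}}\g(\dpi,\cost^k)\le\epsilon_k$ at every round. Since $\min_{\dpi}\sum_k\g(\dpi,\cost^k)\ge\sum_k\min_{\dpi}\g(\dpi,\cost^k)$, the cumulative regret telescopes and $\regret^\pi\le\frac1K\sum_{k=1}^K\epsilon_k$, which is $O(\log K/K)$ for $\epsilon_k=1/k$ and $O(1/\sqrt K)$ for $\epsilon_k=1/\sqrt k$ (using $\sum_k 1/k=O(\log K)$ and $\sum_k 1/\sqrt k=O(\sqrt K)$). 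On $G^c$ the instantaneous gap is still bounded by $2\cost_\mathrm{max}\le 2$, since each $\dpi\in\mathcal{K}$ is a distribution and $\cost_\mathrm{max}\le 1$, so $\mathbb{E}[\regret^\pi]\le\frac1K\sum_k\epsilon_k+2\delta'$.

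Next I would convert this into a high-probability bound exactly as in the UCRL2 proof. \cref{thm:meta} gives $f(\bdpi^K)-f^{\text{OPT}}\le\regret^\pi+\regret^\cost$, so Markov's inequality yields $\text{Pr}\!\left(f(\bdpi^K)-f^{\text{OPT}}\ge\epsilon\right)\le\frac1\epsilon\left(\frac1K\sum_k\epsilon_k+2\delta'+\regret^\cost\right)$. Choosing $\delta'=\Theta(\epsilon\delta)$ absorbs the $2\delta'/\epsilon$ term into $\delta/2$, and I then solve $\frac1\epsilon\left(\frac1K\sum_k\epsilon_k+\regret^\cost\right)\le\delta/2$ for the number of iterations: with $\regret^\cost=O(1/K)$ and $\epsilon_k=1/k$ this requires $K=\tilde{O}(1/(\epsilon\delta))$, while with $\regret^\cost=O(1/\sqrt K)$ and $\epsilon_k=1/\sqrt k$ it requires $K=\tilde{O}(1/(\epsilon^2\delta^2))$.

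Finally, each iteration costs $O\!\left(t_{\text{mix}}^2SA\,\epsilon_k^{-2}\log(K/\delta')\right)=O\!\left(t_{\text{mix}}^2SA\,\epsilon_k^{-2}\log(2K/\epsilon\delta)\right)$ samples, where the $\log$ factor arises from the per-round confidence $\delta'/K$ with $\delta'=\Theta(\epsilon\delta)$. Summing over rounds gives $\sum_k\epsilon_k^{-2}=\sum_k k^2=O(K^3)$ in the first case and $\sum_k\epsilon_k^{-2}=\sum_k k=O(K^2)$ in the second; substituting $K=\tilde{O}(1/(\epsilon\delta))$ and $K=\tilde{O}(1/(\epsilon^2\delta^2))$ respectively reproduces the stated $O(t_{\text{mix}}^2SA\log(2K/\epsilon\delta)/\epsilon^3\delta^3)$ and $O(t_{\text{mix}}^2SA\log(2K/\epsilon\delta)/\epsilon^4\delta^4)$. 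The main obstacle I anticipate is not any single inequality but the bookkeeping in the balancing step: one must split the failure budget correctly between the Markov slack $2\delta'/\epsilon$ and the regret terms, propagate the choice of $\delta'$ back into the per-round $\log(K/\delta')$ sample cost, and verify that summing the growing costs $\epsilon_k^{-2}$ against the required $K$ lands exactly on the exponents $1/\epsilon^3\delta^3$ and $1/\epsilon^4\delta^4$ rather than something looser.
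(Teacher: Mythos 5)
Your proposal is correct and follows essentially the same route as the paper's proof: a union bound over the per-iteration PAC calls with confidence $\delta'/K$, Markov's inequality applied to $\regret^\pi + \regret^\cost$ with the choice $\delta' = \Theta(\epsilon\delta)$, and a final summation of the growing per-iteration sample costs $\epsilon_k^{-2}$ over the required number of iterations $K$. The only differences are cosmetic — you make the telescoping justification for $\regret^\pi \le \frac{1}{K}\sum_k \epsilon_k$ and the failure-event contribution to the expected regret explicit, where the paper leaves them implicit.
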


To solve an MDP to accuracy $\epsilon_k$, it is sufficient to run an RL algorithm for $O(1/\epsilon_k^2)$ iterations. This is a lower bound and an upper bound in $\epsilon$, see, for example \cite{jin2020efficiently} for an upper bound of  $O\left(\frac{t_{\text{mix}}^2SA}{\epsilon^2}\log(1/\delta)\right)$ and a lower bound \citep{jin2021towards} of $O\left(\frac{t_{\text{mix}}SA}{\epsilon^2}\log(1/\delta)\right)$ for the average reward case. 

We continue the proof using the algorithm of \cite{dann2015sample} as the approximate best response player, \ie, we invoke their algorithm at iteration $k$ to find an $\epsilon_k-$optimal solution with probability $1-\delta_k = 1-\delta'/K.$ Applying the union bound over the iterations gives us that with probability of $1-\delta'$, the regret of the policy player is  $\regret^{\pi} = \frac{1}{K} \sum \epsilon_k.$ \footnote{Note that the iterations are independent from each other from the perspective of the approximate best response player so it is possible to apply the union bound. This is because each iteration involves solving a new MDP, and the upper bound does not make any assumptions about the structure of the reward in this MDP.}

We consider two cases for the cost player. In the first, 
we will consider average regret of $\regret^\cost = c/\sqrt{K},$ which is feasible for any of the cost players we considered in this paper. In this case, we will set the per-iteration $\epsilon$ to be $\epsilon_k = c/\sqrt{k}$. We have that $$\regret =  \regret^{\pi} + \regret^\cost \le \frac{1}{K}\sum_{k=1}^K c_2/\sqrt{k} + c_1/\sqrt{K} \le \frac{1}{K} c_3 \sqrt{k} + c_1/\sqrt{K} \le c_4 / \sqrt{K}.$$
Then, via Markov inequality we get that 
\begin{align*}
    \text{Pr}( f(\bdpi^K) - f^{\text{OPT}} \le \epsilon) &\le \frac{\mathbb{E} f(\bdpi^K) - f^{\text{OPT}}}{\epsilon} \le \frac{\mathbb{E} \regret^{\pi} + \regret^{\cost}}{\epsilon}  \le  \frac{1}{\epsilon}\left(\frac{c_4}{\sqrt{K}} + \delta'  \right).
\end{align*}
Setting $\delta'=\epsilon\delta/2$ implies that it is enough to run the algorithm for $K=c_5/\epsilon^2\delta^2$ iterations to find an $\epsilon-$optimal solution with probability of $1-\delta$.

In each iteration $k,$ in order to find an $\epsilon_k = c_1/\sqrt{k}$ optimal solution w.p $1-\delta'/K,$ we need to collect $c_2kt_{\text{mix}}^2SA\log(K/\delta')$ samples. Thus, the total number of samples is $$\sum_{k=1}^{c_5/\epsilon^2\delta'^2} c_2kt_{\text{mix}}^2SA\log(K/\delta') = c_2t_{\text{mix}}^2SA\log(K/\delta') \sum_{k=1}^{c_5/\epsilon^2\delta^2} k \le c_3t_{\text{mix}}^2SA\log(2K/\epsilon\delta)/\epsilon^4\delta^4.$$

In the second scenario we have a cost player with constant regret, and therefore average regret of $\regret^\cost \le c_1/K,$ which is possible to achieve under some assumptions \citep{huang2016following}. We set $\epsilon_k = c/k$ and get that
$$\regret =  \regret^{\pi} + \regret^\cost \le \frac{1}{K}\sum_{k=1}^K c_2/k + c_1/K \le \frac{\log(K)}{K} c_3 \sqrt{k} + c_1/K \le c_4\frac{\log(K)}{K}.$$
Then, via Markov inequality we get that 
\begin{align*}
    \text{Pr}( f(\bdpi^K) - f^{\text{OPT}} \le \epsilon) &\le \frac{\mathbb{E} f(\bdpi^K) - f^{\text{OPT}}}{\epsilon} \le \frac{\mathbb{E} \regret^{\pi} + \regret^{\cost}}{\epsilon}  \le  \frac{1}{\epsilon}\left(\frac{c_4\log(K)}{K} + \delta'  \right).
\end{align*}
Setting $\delta'=\epsilon\delta/2$ implies that it is enough to run the algorithm for $K=c_5/\epsilon\delta$ iterations to find an $\epsilon-$optimal solution with probability of $1-\delta$.

In each iteration $k,$ in order to find an $\epsilon_k = c_1/k$ optimal solution w.p $1-\delta'/K,$ we need to collect $c_2k^2t_{\text{mix}}^2SA\log(K/\delta')$ samples, which leads to a total number of samples of $$\sum_{k=1}^{c_5/\epsilon\delta} c_2k^2t_{\text{mix}}^2SA\log(K/\delta') = c_2t_{\text{mix}}^2SA\log(K/\delta') \sum_{k=1}^{c_5/\epsilon\delta} k^2 \le c_3t_{\text{mix}}^2SA\log(2K/\epsilon\delta)/\epsilon^3\delta^3.\qed$$

\textbf{Discussion on how to choose the schedule for $\epsilon_k$}

The overall regret of the game is the sum of the regret of the policy player and the cost player, and the regret of the game is asymptotically 
\begin{equation}
\label{eq:regret_rl}
 \regret =  \regret^{\pi} + \regret^\cost = O\left(\max (\regret^{\pi},\regret^{\cost})\right)
\end{equation}

Consider the general case of $\epsilon_k = 1/k^p.$  Note that for the average regret $\frac{1}{K}\sum_{k=1}^K 1/k^p$ to go to zero as $K$ grows, the sum $\frac{1}{K}\sum_{k=1}^K 1/k^p$ must be smaller than $K$, so $p$ must be positive. 
In addition, for larger values of $p$, $\epsilon_k$ is smaller. Thus the regret is smaller, but at the same time, it requires more samples to solve each RL problem. 
Inspecting the maximum in \cref{eq:regret_rl}, we observe that it does not make sense to choose a value for $p$ for which $\frac{1}{K}\sum_{k=1}^K 1/k^p<\regret^{\cost},$ since it will not improve the overall regret and will require more samples, than, for example,  setting $p$ such that  $\frac{1}{K}\sum_{k=1}^K 1/k^p = \regret^{\cost}.$

Thus, in the case that the cost player has constant regret, $\regret^\cost = O(1/K),$ we set $p\in (0,1],$ and in the case that the cost player has regret of $\regret^\cost = O(1/\sqrt{K}),$ we set $p\in (0,0.5].$

We now continue and further inspect the regret. We have that $\frac{1}{K} \sum_{k=1}^K \epsilon_k = \frac{1}{K} \sum_{k=1}^K 1/k^p = O(k^{-p})$ for $p\in (0,1),$ and $\log(K)/K$ for $p=1.$ Neglecting logarithmic terms, we continued with $O(k^{-p})$ for both cases. In other words, it is sufficient to run the meta-algorithm for $K=1/\epsilon^p$ iterations to guarantee an error of at most $\epsilon$ for the convex MDP problem. 

Thus, to solve an MDP to accuracy $\epsilon_k = 1/k^p$ it requires $k^{2p}$ iterations, and the overall sample complexity is therefore  $\sum_{k=1}^{1/\epsilon^p} k^{2p} = O(1/\epsilon^{\frac{2p+1}{p}}).$ 

The function $1/\epsilon^{\frac{2p+1}{p}}$ is monotonically increasing in $p$, so it attains minimum for the highest value of $p$ which is $0.5$ or $1,$ depending on the cost player. We conclude that the optimal sample complexity with approximate best response is $O(1/\epsilon^3)$ for the cost player that has constant regret and $O(1/\epsilon^4)$ for a cost player with average regret of $\regret^\cost = O(1/\sqrt{K}).$

\section{Proof sketch for Lemma 3}
\label{sec:ucrl_proof}
We denote by $r^*_k $ the optimal average reward at time $k$ in an MDP with dynamics $P$ and reward $r_k = -\cost_k.$ We want to show that 
$$
R_k =\sum_k  r^*_k - r_k(s_k,a_k)  \le c/\sqrt{K},
$$
that is, that the total reward that the agent collects has low regret compared to the sum of optimal average rewards. 

To show that, we make two minor adaptations to the UCRL2 algorithm and then verify that its original analysis also applies to this non-stationary setup. The first modification is that the nonstatioanry version of UCRL2 uses the known reward $r_k$ at time $k$ (which in our case is the output of the cost player) instead of estimating the unknown, stochastic, stationary, extrinsic reward. Since the current reward $r_k$ is known and deterministic, there is no uncertainty about it, and we only have to deal with uncertainty with respect to to the dynamics. The second modification is that we compute a new optimistic policy (using extended value iteration) in each iteration. This optimistic policy is computed with the current reward $r_k$, and the current uncertainty set about the dynamics $\optset$. This also means that all of our episodes are of length $1.$

After making these two clarifications, we follow the proof of UCRL2 and make changes when appropriate. We note that the analysis, basically, does not require any modifications, but we repeat the relevant parts for completeness. We begin with the definition of the regret at episode $k,$ which is now just the regret at time $k: $ $$\Delta_k = \sum_{s,a}v_k(s,a)(r^*_k - r_k(s,a)),$$
where $v_k(s,a)$ in our case is an indicator on the state action pair $s_k,a_k$, and $R_k = \sum_k \Delta_k$.

The instantaneous regret $\Delta_k$ measures the difference between the optimal average reward $r^*_k$, with respect to reward $r_k,$ and the reward $r_k(s,a)$ that the agent collected at time $k$ by visiting state $s$ and taking action $k$ from the reward that is produced by the cost player. 

Section 4.1 in the UCRL2 paper is the first step in the analysis. It bounds possible fluctuations in the random reward. This step is not required in our case since our reward at time $k$ is the output of the cost player, which is known in all the states and deterministic.  

Section 4.2 considers the regret that is caused by failing confidence regions, that is, the event that the true dynamics and true reward are not in the confidence region. In our case there is only confidence region for the dynamics (since the reward is known), which we denote by $\optset$. Summing the expected regret from episodes in which $P \notin \optset$ results in a $\sqrt{K}$ term in the regret, 
$$\Delta_k \le \sum_{s,a}v_k(s,a) (r^*_k - r_k(s,a)) + \sqrt{K},$$
where from now on, we continue with the event that $P \in \optset.$

Next, we denote the optimistic policy and optimistic MDP as the solution of the following problem $\piopt, \popt = \arg\max_{\pi\in\Pi, P'\in \optset}J^{P',r_k}_\pi.$ In addition, we denote by $\ropt$ the optimstic average reward, that is, the average reward of the policy $\piopt$ in the MDP with the optimstic dynamics $\popt$ and reward $r_k.$ We also note that $\piopt$ is the optimal average reward policy in this MDP by its definition.

We now continue with the case that $P \in \optset.$ The next step is to bound the difference between the optimal average reward $r^*_k$ and the optimistic average reward $\ropt.$  We note that both $\ropt$ and $r^*_k$ are average rewards that correspond to $r_k.$ The difference between them is that $r^*_k$ is the optimal average reward in an MDP with the true dynamics $P$ and $\ropt$ is the optimal average reward in an MDP with the optimistic dynamics $\popt.$ Thus, the fact that the reward is known, in our case, does not change the fact that that the optimstic reward is a function of the dynamics uncertainty set $\optset$.

To compute the optimstic policy and dynamics, UCRL2 uses the extended value iteration procedure of \cite{strehl2008analysis} to efficiently compute the following iterations:

\begin{align}
\label{eq:evi}
    u_0(s) &= 0 \\\nonumber
    u_{i+1}(s) &= \max_{a\in A} \left\{ r_k(s,a) + \max_{P\in \popt} \sum_{s'\in S}P(s'|s,a) u_i(s')\right\},
\end{align}

Using Theorem 7 from \citep{jaksch2010near} we have that running extended value iteration to find the optimistic policy in the optimistic MDP for $t_k$ iterations guarantees that  $\tilde r_k \ge r^*_k -1/\sqrt{t_k}.$ Thus, we have that:

\begin{align*}
  \Delta_k &\le \sum_{s,a}v_k(s,a) (r^*_k - r_k(s,a))  + \sqrt{K} \le \sum_{s,a}v_k(s,a) (\tilde r_k - r_k(s,a)) + 1/\sqrt{t_k}  + \sqrt{K}\\
\end{align*}
Using \cref{eq:evi}, we write the last iteration of the extended value iteration procedure as:
\begin{equation}
\label{eq:ui1}
    u_{i+1}(s) = r_k(s_k,\piopt(s)) + \sum_{s'\in S}\popt(s'|s,(\piopt(s))) u_i(s')
\end{equation}

Theorem 7 from \citep{jaksch2010near} guarantees that after running extended value iteration for $t_k$ we have that 
\begin{equation}
\label{eq:ui2}
\|u_{i+1}(s)-u_i(s) -\tilde r_k\| \le 1/ \sqrt{t_k}.
\end{equation}

Plugging \cref{eq:ui1} in \cref{eq:ui2} we have that:
\begin{align}
\label{eq:ui3}
\|r_k(s_k,\piopt(s)) -\tilde r_k + \sum_{s'\in S}\popt(s'|s,(\piopt)) u_i(s') -u_i(s) \|  \le 1/ \sqrt{t_k},
\end{align}
and therefore 
$$
\tilde r_k - r_k(s_k,a_k) =  \tilde r_k - r_k(s_k,\piopt(s)) \le v_k(\tilde P_k - I) u_i +1/ \sqrt{t_k}.
$$
In the next step in the proof, the vector $u_i$ is replaced with $w_k,$ which is later upper bounded by the diameter of the MDP $D.$ To conclude, we have that 
\begin{align*}
\Delta_k &\le \sum_{s,a}v_k(s,a)(\tilde r_k - r_k(s,a)) + 1/\sqrt{t_k} + \sqrt{K}  \le v_k(\tilde P_k - I) w_k + 2/\sqrt{t_k}  + \sqrt{K}.
\end{align*}
From this point on, the proof follows by bounding the term $v_k(\tilde P_k - I) w_k,$ which is only related to the dynamics, and combines all of the previous results into the final result, thus, it is possible to follow the original proof without any modification. Since the leading terms in the original proof come from uncertainty about the dynamics, we obtain the same bound as in the original paper. 

\section{Experiments: Entropy constrained RL.}
\label{sec:experiments}
Above, we presented a principled approach to using standard RL algorithms to solve convex MDPs. We also suggested that DRL agents can use this principle and solve convex MDPs by optimizing the reward from the cost player. We now demonstrate this by performing experiments with Impala \citep{impala18a}, a distributed actor-critic DRL algorithm. Our main message is that in domains where Impala can solve RL problems (\eg, problems without hard exploration), it can also solve convex MDPs.

Here we focus on an MDP with a convex constraint, where the goal is to maximize the extrinsic reward provided by the environment with the constraint that the entropy of the state-action occupancy measure must be bounded below. In other words, the agent must solve
$\max_{\dpi \in \mathcal{K}} \sum_{s,a} r(s,a) \dpi(s,a)$ subject to $H(\dpi) \ge C$, where $H$ denotes entropy and $C>0$ is a constant. The policy that maximizes the entropy over the MDP acts to visit each state as close to uniformly often as is feasible. So, a solution to this convex MDP is a policy that, loosely speaking, maximizes the extrinsic reward under the constraint that it explores the state space sufficiently. The presence of the constraint means that this is not a standard RL problem in the form of \cref{eq:rl-prob}. However, the agent can solve this problem using the techniques developed in this paper, in particular those discussed in \cref{sec:constrained}.
 
We evaluated the approach on the bsuite environment `Deep Sea', which is a hard exploration problem where the agent must take the exact right sequence of actions to discover the sole positive reward in the environment; more details can be found in \cite{osband2019behaviour}. In this domain, the features are one-hot state features, and we estimate $\dpi$ by counting the state visitations. For these experiments we chose $C$ to be half the maximum possible entropy for the environment, which we can compute at the start of the experiment and hold fixed thereafter.
We equipped the agent with the (non-stationary) Impala algorithm, and the cost-player used FTL. We present
the results in Figure \ref{fig:deep_sea} where we compare the basic Impala agent, the entropy-constrained Impala agent and bootstrapped DQN \cite{osband2016deep}. As made clear in \cite{o2020making} algorithms that do not properly account for uncertainty cannot in general solve hard exploration problems. This explains why vanilla Impala, considered a strong baseline, has such poor performance on this problem. Bootstrapped DQN accounts for uncertainty via an ensemble, and consequently has good performance. Surprisingly, the entropy regularized Impala agent performs approximately as well as bootstrapped DQN, despite not handling uncertainty. This suggests that the entropy constrained approach, solved using \cref{alg:meta}, can be a reasonably good heuristic in hard exploration problems.

\begin{figure}[h]
\centering
    \includegraphics[width=0.3\linewidth]{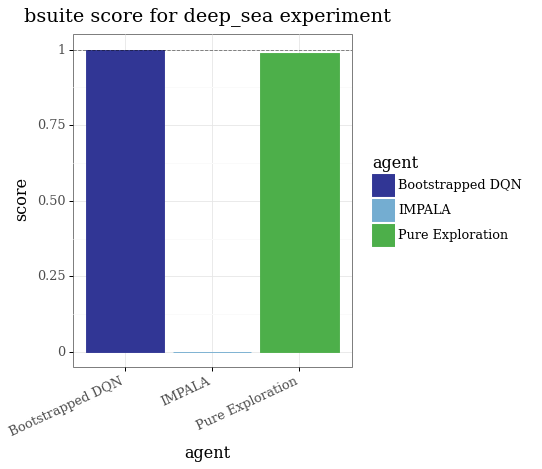}
    \includegraphics[width=0.6\linewidth]{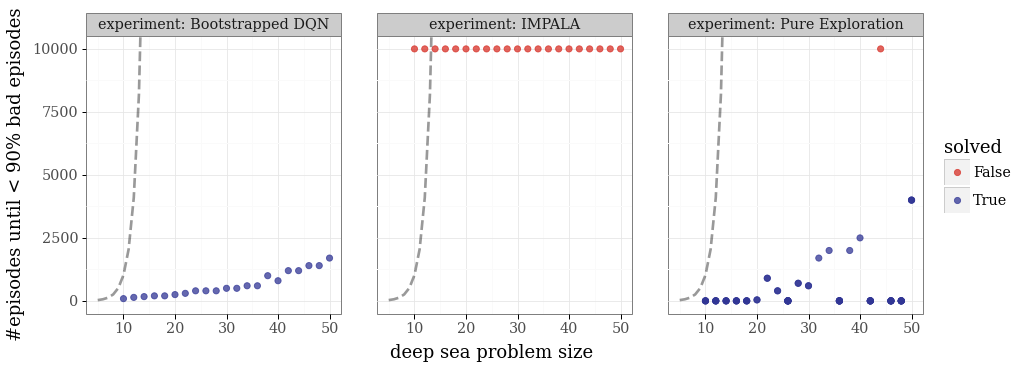}
 \caption{Entropy constrained RL}
 \label{fig:deep_sea}
\end{figure}

\end{document}